\documentclass{article}
\DeclareUnicodeCharacter{FF0C}{,}
\usepackage[margin=1in]{geometry}
\linespread{1.2}

\usepackage{titlesec}
\titleformat*{\section}{\large\bfseries}
\titleformat*{\subsection}{\bfseries}
\titleformat*{\subsubsection}{\itshape}
\usepackage{threeparttable}
\usepackage{authblk}

\usepackage{lineno}

\usepackage{array}
\newcolumntype{P}[1]{>{\centering\arraybackslash}p{#1}}

\usepackage{amsmath, amssymb, amsthm}
\usepackage{mleftright}
\usepackage{bm, bbm}
\usepackage{mathtools}

\newtheorem{proposition}{Proposition}

\newtheorem{definition}{Definition}

\newtheorem{example}{Variant}

\newtheorem{property_nl}{NL Property}
\newtheorem{property_Nested_Alt-GNN}{Nested Alt-GNN Property}

\usepackage{graphicx}
\usepackage{booktabs, float, multirow, threeparttable}
\usepackage[skip=5pt]{caption}
\usepackage[skip=5pt]{subcaption}
\usepackage{array}
\usepackage[detect-all]{siunitx}
\newrobustcmd{\B}{\DeclareFontSeriesDefault[rm]{bf}{b}\bfseries}

\usepackage{enumitem}
\setlist{nosep}

\usepackage{soul}
\usepackage{xcolor}

\usepackage{comment}
\usepackage{microtype}

\usepackage[title]{appendix}

\usepackage[colorlinks, allcolors=blue]{hyperref}
\usepackage[nameinlink, capitalise]{cleveref}
\usepackage{natbib}

\captionsetup[subfigure]{font=scriptsize,skip=2pt}
\title{Alternative Graph Neural Networks: Synergizing GEV Models and Deep Learning for Travel Mode Choice Modeling}

\author[a]{Yuqi Zhou}
\author[a]{Zhanhong Cheng}
\author[d]{Dingyi Zhuang}
\author[b]{Lingqian Hu}
\author[c]{Yuheng Bu}
\author[a,*]{Shenhao Wang}

\affil[a]{Department of Urban and Regional Planning, University of Florida}
\affil[b]{College of Architecture, Texas A\&M University}
\affil[c]{Department of Computer Science, University of California, Santa Barbara}
\affil[d]{Department of Civil and Environmental Engineering, Massachusetts Institute of Technology}

\date{}


\begin{document}
\allowdisplaybreaks
\maketitle
\vspace*{-2em}

\begin{abstract}
\noindent Generalized extreme value models capture dependence 
among choice alternatives in discrete choice modeling, but require this dependence to be predefined, symmetric, and shared uniformly across individuals. Recent efforts to synergize discrete choice models with deep neural networks have improved predictive performance but still cannot explicitly represent alternative dependence within neural architectures. To address these gaps, we introduce the alternative graph - a graph in which nodes represent choice alternatives and edges encode their dependence -and propose Alternative Graph Neural Networks (Alt-GNNs), a family of GNN-based discrete choice models that embed alternative dependence within a unified framework. Theoretically, Alt-GNNs incorporate multinomial logit, nested logit, and ASU-DNN as special cases and enable innovative model designs, including Nested Alt-GNN, Complete Alt-GNN, and Attention Alt-GNN. Alt-GNNs are consistent with random utility maximization theory, enforce behavioral constraints through alternative graphs, and offer a novel graph-based interpretation of utility functions. Empirically, on two travel mode choice datasets from London and Chicago, Alt-GNNs significantly improve predictive performance over all benchmark models in mode choice modeling because of their flexible alternative graph design and vast hyperparameter space. Even the simplest Alt-GNN variant - Nested Alt-GNN - generalizes the nested logit model while preserving its unique two-layer substitution properties, enabling graph-based behavioral constraints over otherwise unconstrained behavioral patterns from deep neural networks. In addition, Alt-GNNs also enable graph-based, individualized, and asymmetric interpretation of alternative dependence, transcending pre-defined nesting structures dominant in GEV models. Overall, this study provides a unified framework that synergizes classical GEV models with modern GNNs, advancing both predictive accuracy and behavioral interpretation in travel mode choice modeling.
\end{abstract}

{\small\emph{Keywords}:  choice modeling, travel behavior, graph neural network, deep learning
}

\thispagestyle{empty}
\clearpage

\setcounter{page}{1}

\section{Introduction}
Discrete choice models (DCMs) provide a powerful theoretical framework to explain individual decisions and serve as the foundation for analyzing travel mode choice in transportation planning \citep{ben-akivaMultinomialChoice1985, koppelmanTRAVELCHOICEBEHAVIORMODELS1980, trainAutoOwnershipUse1986}. 
Among the rich DCM families, the generalized extreme value (GEV) framework encodes dependence among alternatives through generating functions, yielding substitution patterns richer than those permitted by the independence of irrelevant alternatives (IIA) constraint \citep{benAkivaFrancois1983, manskiStructureRandomUtility1977, ben-akivaDiscreteChoiceMethods1999, mcfaddenMixedMNLModels2000}. It is essential to leverage this alternative dependence to improve predictive performance and model interpretation, and misspecified dependence structures distort market shares, elasticities, or other policy implications \citep{wangDeepNeuralNetworks2020,hanNeuralembeddedDiscreteChoice2022,vancranenburghArtificialNeuralNetwork2019}. More recently, researchers have begun synergizing DCMs and deep neural networks (DNNs)  by designing innovative model architectures \citep{wangDeepNeuralNetworks2020a, hanNeuralembeddedDiscreteChoice2022,wongResLogitResidualNeural2021} or imposing behavioral constraints \citep{fengDeepNeuralNetworks2024, kimNewFlexiblePartially2024}. These synergistic approaches demonstrate improved predictive performance by automating utility specification without requiring analysts to prespecify functional forms \citep{vancranenburghArtificialNeuralNetwork2019}.

However, the existing synergistic approaches still cannot unify the rich GEV model family or explicitly control the alternative dependence for rich behavioral interpretation. On one hand, researchers have leveraged DNNs for flexible utility specification, e.g., DNN architecture with alternative-specific utility functions (ASU-DNN), Residual Logit (ResLogit), Multinomial Logit with a Taste Network (TasteNet-MNL) and Embeddings Learning Multinomial Logit (EL-MNL), but still treat alternatives as largely independent inputs: any inter-alternative dependence is absorbed into shared feedforward weights without structural controls. As a result, alternative dependence is either hidden in the feedforward structure without identifying the substitution mechanism \citep{hanNeuralembeddedDiscreteChoice2022, wangDeepNeuralNetworks2021, wongResLogitResidualNeural2021, sifringerEnhancingDiscreteChoice2020,arkoudiCombiningDiscreteChoice2023}, or captured only under the overly restrictive IIA constraint, as in ASU-DNN \citep{wangDeepNeuralNetworks2020a}. On the other hand, GEV models routinely exploit alternative dependence to improve performance and capture rich substitution patterns, with examples including nested logit (NL), cross-nested logit, generalized nested logit, paired combinatorial logit, and spatially correlated logit \citep{mcfaddenModelingChoiceResidential1978, wenGeneralizedNestedLogit2001, ben-akivaDiscreteChoiceMethods1999, bhatMixedSpatiallyCorrelated2004}. However, in such GEV models, the alternative dependence is mostly specified a priori, shared uniformly across all individuals, constrained to symmetric interactions, and limited by the form of the chosen generating function. This landscape motivates the design of frameworks that further synergize these two paradigms by endowing DNNs with explicit, structurally controlled alternative dependence while extending GEV models with data-driven, flexible, and individualized dependence structures.

To achieve this goal, this study introduces the \textbf{alternative graph}, a graph in which nodes represent choice alternatives and edges encode their dependence, as a new representation of the choice set. We propose \textbf{Alternative Graph Neural Networks} (Alt-GNNs), a family of GNN-based discrete choice models that embed alternative dependence to enable controls over behavioral patterns from DNNs. The Alt-GNN framework achieves significant expressive power through a rich hyperparameter space. It incorporates multinomial logit (MNL), NL, and ASU-DNN as special cases and extends to new variants including Nested Alt-GNN, Complete Alt-GNN, and Attention Alt-GNN. We evaluate all variants on travel mode choice datasets from London Passenger Mode Choice (LPMC) \citep{hillelRecreatingPassengerMode2018} and Chicago Metropolitan Agency for Planning (CMAP)  \citep{HouseholdTravelSurvey}, exploring 259 configurations across graph structure, message function, aggregation function, and readout function. Overall, this study presents a unifying framework to synergize GEV and GNN models by leveraging the concept of alternative graph and GNN algorithms. Together, this study addresses four related research questions:

\begin{itemize}
    \item \textbf{RQ1 - Theoretical Framework}: Theoretically, how do Alt-GNNs provide a unifying framework for GEV and GNN models while enabling new model designs and valid behavioral interpretations grounded in random utility maximization (RUM) framework?
    
    \item \textbf{RQ2 - Predictive Power}: Empirically, to what extent do Alt-GNNs improve the predictive performance of benchmark DCMs, and what mechanisms explain these improvements?
    
    \item \textbf{RQ3 - Behavioral Constraints}: How can Alt-GNNs enable flexible and yet systematically controlled elasticity and substitution patterns through the design of alternative graphs?
    
    \item \textbf{RQ4 - Graph Interpretation}: How do Alt-GNNs enable new forms of utility interpretation through alternative graphs that are not achievable in classical GEV models?
\end{itemize}

The remainder of this paper is organized to address the four research questions. \cref{sec:literature_review} presents the research landscape by reviewing the literature on classical DCMs, deep learning for choice modeling, and graph neural networks. \cref{sec:methodology} answers RQ1 by introducing the alternative graph concept, defining the Alt-GNN framework, presenting its key variants, and establishing their theoretical properties. \cref{sec:experiment_setup} describes the experiment design, and \cref{sec:results} presents empirical results along three dimensions: predictive performance in \cref{sec:results_prediction} (RQ2), elasticity and substitution patterns in \cref{sec:results_interpretation_substitution} (RQ3), and graph interpretation in \cref{sec:results_interpretation_graph} (RQ4). \cref{sec:conclusions} concludes and outlines future research directions. Following open science practices, our code and data are available at \url{https://github.com/urbanailab/GNN_travel_mode_choice}.

\section{Literature Review}
\label{sec:literature_review}


\subsection{Discrete choice models for alternative dependence}
\label{sec:literature_review_nl}

It is both a theoretical challenge and an opportunity to enrich classical DCMs by leveraging alternative dependence. In the MNL model, the unobserved utility components are assumed to be i.i.d., so substitution patterns are proportional across all alternatives---a restriction known as the Independence of Irrelevant Alternatives (IIA) property \citep{luceIndividualChoiceBehavior1959, mcfaddenConditionalLogitAnalysis1974}. To address IIA, the GEV framework captures alternative dependence through structured generating functions that induce correlated errors among related alternatives, first derived by \cite{mcfaddenModelingChoiceResidential1978} and later generalized by \cite{benAkivaFrancois1983}. As the simplest GEV case, the NL model relaxes IIA by allowing alternatives within the same nest to share correlated error components \citep{williams1977, mcfaddenModelingChoiceResidential1978, dalyZachary1978}. The paired combinatorial logit model extends NL by forming a nest for each pair of alternatives, enabling richer substitution patterns through overlapping two-alternative nests \citep{mcfaddenModelingChoiceResidential1978, chu1989, koppelmanWen2000}. The spatially correlated logit model further restricts these paired nests to adjacent alternatives, so that only neighboring alternatives share unobserved components \citep{bhatMixedSpatiallyCorrelated2004}. The cross-nested logit model generalizes NL by allowing alternatives to belong to multiple nests through allocation parameters \citep{vovsha1997, ben-akivaDiscreteChoiceMethods1999}, and the generalized nested logit model extends cross-nested logit with nest-specific dissimilarity parameters and heterogeneous scale parameters \citep{wenGeneralizedNestedLogit2001}. \cite{bierlaireNetworkGEVModel2002} introduces the network GEV model, the most general member of the GEV family, which defines the generating function recursively over a directed acyclic graph and interprets choice as a hierarchical process of selecting nests before alternatives. However, GEV models capture alternative dependence subject to several limitations. The nest structures must be specified a priori, requiring strong assumptions about which alternatives are similar. Within any nest, a single scale parameter enforces symmetric and uniform dependence among all members, preventing the model from representing asymmetric cross-alternative effects. Finally, the dependence structure is homogeneous across decision makers: individual variation in substitution patterns cannot be represented without augmenting the model with additional components.

Regarding interpretation, the GEV family encodes alternative dependence through the inclusive value, in which the utility of one alternative is shaped by the utilities of correlated alternatives in the same nest. While this structure is typically interpreted through a hierarchical decision-making mechanism, it is also analogous to the cross-utility effects in the mother logit model \citep{timmermansMotherLogitAnalysis1991}, which parameterizes cross-alternative influences directly in the systematic utility terms. Therefore, both can be understood as instances of a broader class of models in which an alternative's utility interacts with those of other alternatives, thus validating our behavioral interpretation of the Alt-GNNs. Regarding applications, GEV models are widely adopted for analyzing travel demand, including route choice \citep{bovyFactorRevisitedPath2008, liuUnderstandingRouteChoice2022, mepparambathNovelModellingApproach2023}, mode choice \citep{dingExploringInfluenceBuilt2017, batesPivotingKnownBase2024}, activity choice \citep{bowmanActivitybasedDisaggregateTravel2001}, and location choice \citep{perez-lopezSpatiallyCorrelatedNested2022}, as all such choice sets contain graph structures. 

Different from GEV models, simulation-based models capture alternative dependence through the joint distribution of random coefficients, with Probit and Mixed logit model as leading examples. The Probit model assumes that $\varepsilon_n$ follows a multivariate normal distribution, so that alternative dependence is directly encoded in the covariance matrix \citep{blissMethodProbits1934,thurstoneLawComparativeJudgment1974,hausmanConditionalProbitModel1978}. In Mixed logit model, the primary role of the mixing distribution $f(\beta)$ is to represent unobserved taste heterogeneity, and alternative dependence arises indirectly when random coefficients are correlated across alternatives \citep{mcfaddenMixedMNLModels2000, hessCorrelationScaleMixed2017, bansalBayesianEstimationMixed2020}. Both models are widely applied across travel behavior contexts, including mode choice \citep{volakakisWordsMatterAutonomous2025}, activity choice \citep{ghaderCopulabasedContinuousCrossnested2021, wangProbitBasedDiscreteContinuousChoice2023}, and location choice \citep{sahaModelingBicyclistsDestination2025}. However, simulation-based models are categorically different from the GEV family in model specification and training. Simulation-based models evaluate choice probabilities through numerical integration via stochastic simulation, whereas GEV models have closed-form choice probabilities and are estimated by deterministic optimization---leading to substantial differences in computational cost and convergence properties. Nonetheless, the two approaches are complementary as the coefficients in GEV models can be correlated through random covariance matrices \citep{bhatMixedSpatiallyCorrelated2004, ghaderCopulabasedContinuousCrossnested2021}. Similarly, while our Alt-GNNs are more pertinent to the GEV paradigm as they rely on deterministic training without simulation, they can be integrated with the simulation-based approaches.

\subsection{Synergizing deep learning and discrete choice models}
\label{sec:literature_review_dnn}
In light of the parametric restrictions in DCMs, researchers started to synergize DCMs and DNNs, thus enhancing model prediction and interpretation. One approach is architectural integration, which incorporate neural components into the utility specification or parameterization. For example, \cite{wangDeepNeuralNetworks2020a} proposed ASU-DNN, which replaces the linear-in-parameters utility specification in the MNL model with a nonlinear neural network (NN), while retaining the MNL structure and its IIA property. ResLogit augments the utility function with a neural residual, which captures nonlinear cross-effects across alternatives and relaxes the restrictive substitution patterns implied by standard logit models \citep{wongResLogitResidualNeural2021}. TasteNet-MNL uses a NN to generate individual-specific taste parameters within the utility function, enabling flexible preference heterogeneity \citep{hanNeuralembeddedDiscreteChoice2022}. Embeddings Multinomial Logit (E-MNL) achieves interpretability by assigning each embedding dimension to a specific alternative, while EL-MNL extends this structure with an additional NN term that improves predictive performance but lacks direct behavioral interpretation \citep{arkoudiCombiningDiscreteChoice2023}. On the other hand, researchers have also attempted to improve the interpretability of DNN-based choice models by imposing behavioral constraints or adding regularization terms to the loss function to enforce economically meaningful properties. For example, \cite{fengDeepNeuralNetworks2024} introduces gradient regularization to enforce monotonic relationships between attributes and choice probabilities. \cite{haj-yahiaIncorporatingDomainKnowledge2025} incorporate behavioral domain knowledge through loss-based regularization by penalizing violations of economically meaningful properties, such as monotonicity and willingness-to-pay consistency, rather than directly regularizing gradients. Lattice network-based discrete choice model (DCM-LN) represents the utility function using a lattice network and guarantees partial monotonicity by design through constraints on the network parameters \citep{kimNewFlexiblePartially2024}. In addition, some studies synergize DCMs and DNNs at the computational level. They represent choice models as computational graphs with differentiable components, which allows unified estimation of econometric models and neural networks \citep{kimComputationalGraphbasedFramework2022, kimComputationalGraphbasedMathematical2024}.



The synergistic DNN-DCM models achieve higher predictive performance across various travel behavior applications, such as mode choice \citep{hanNeuralembeddedDiscreteChoice2022,arkoudiCombiningDiscreteChoice2023,kimNewFlexiblePartially2024,haj-yahiaIncorporatingDomainKnowledge2025}, route choice \citep{yaoVariationalAutoencoderApproach2022}, traffic flow and demand–supply interaction \citep{kimComputationalGraphbasedMathematical2024,zhouFlowthroughTensorsUnified2025}, as well as sequences of joint activity–destination–mode choices over the day \citep{fredrikssonJointContextawareNeural2026}. While many studies demonstrated that DNNs can capture more flexible elasticity values and substitution patterns \citep{wangDeepNeuralNetworks2020a,wongResLogitResidualNeural2021}, the main challenge remains how to exert delicate behavioral constraints over the overly irregular economic information, while still leveraging the expressive power and enabling new model designs through DNN architectures. 

GNNs can potentially be leveraged to enhance such DCM-DNN synergy because its graph representation and message passing algorithm reflect the algorithmic nature in GEV models, and yet such GEV-GNN synergy has never been investigated. In GNNs, the message passing mechanism constitutes the algorithmic core \citep{corsoGraphNeuralNetworks2024}: each node aggregates information from its neighboring nodes and updates its representation accordingly. Through multi-layer message passing, the model effectively captures long-range dependencies, enabling a more comprehensive understanding of the underlying graph structure. Typical GNN examples include Graph Attention Networks and Graph Convolutional Networks: Graph Attention Networks learn the importance of neighboring alternatives through data-driven attention weights, while Graph Convolutional Networks model alternative dependence using adjacency-based convolution and Laplacian regularization to enforce smoothness in node representations \citep{velickovicGraphAttentionNetworks2018, kipfSemiSupervisedClassificationGraph2017a}. GNNs also have vast hyperparameter space, enabling enormous model design potentials, as \cite{youDesignSpaceGraph2020} constructed a 12-dimensional design space to combine diverse architectural components (e.g., aggregation functions, activation functions, layer connection methods, etc.). Different from feedforward neural networks capturing input dependency only implicitly, GNNs use graph topology to explicitly represent the dependence among nodes. Such capabilities of graph representation have been used for travel demand prediction and choice modeling \citep{liGraphNeuralNetwork2022,fanGraphGuidedNeuralNetwork2023,tomlinsonGraphbasedMethodsDiscrete2024,villarragaDesigningGraphConvolutional2025a}. However, in previous choice modeling studies, graph structures mainly describe individuals’ social networks, while the alternative dependence through the graph representation has largely been overlooked. \cite{chengGraphNeuralNetworks2025} is, to our knowledge, the only prior work that applied GNN to capture alternative dependence. Nonetheless, our study tackles a smaller alternative set in travel mode choice, as opposed to residential location \citep{chengGraphNeuralNetworks2025}, which enables us to conduct a more nuanced analysis of graph-based interpretation and substitution patterns in the Alt-GNN framework. Overall, the integration of GNNs and GEV models remains largely an open research question, warranting further exploration. 

In summary, classical DCMs rely on predefined nesting structures with parametric assumptions, symmetric utility specifications, and uniform alternative representation among individuals. The synergistic DNN-DCM approaches achieved higher predictive power, but used mainly feedforward architectures without leveraging graph representations to explicitly capture alternative dependence or impose behavioral constraints. Even the studies most pertinent to ours mainly focused on social network effects rather than alternative graphs \citep{villarragaDesigningGraphConvolutional2025a, tomlinsonGraphbasedMethodsDiscrete2024}. To address such limitations, we introduce the concept of alternative graph and Alt-GNN models, thus investigating the currently underexplored potentials in GEV-GNN synergy, as shown below. 

\section{Methodology}
\label{sec:methodology}

\subsection{Representation and definition of Alt-GNN}
\label{sec:methodology_alter_graph}
The alternative graph is denoted as $\mathcal{G} = (\mathcal{V}, \mathcal{E})$, where $\mathcal{V}$ represents the set of choice alternatives, and $\mathcal{E}$ is a set of edges representing the relationship among choice alternatives. Two related alternatives are connected by an edge. The edge set $\mathcal{E}$ can be represented by an adjacency matrix $\mathcal{A}_{\left|\mathcal{V}\right| \times \left|\mathcal{V}\right|}$, where, $a_{ij} = 1, \forall i, j \in \mathcal{V}$, if choice alternatives $i$ is connected to choice $j$, and $a_{ij} = 0$ otherwise. 
\begin{figure}[htbp]
\centering

\begin{subfigure}[b]{0.24\linewidth}
    \includegraphics[width=\linewidth]{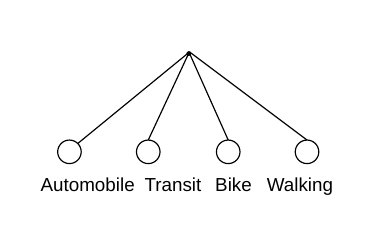}
    \caption{nesting structure 0}
    \label{sfig:nonenest}
\end{subfigure}
\begin{subfigure}[b]{0.24\linewidth}
    \includegraphics[width=\linewidth]{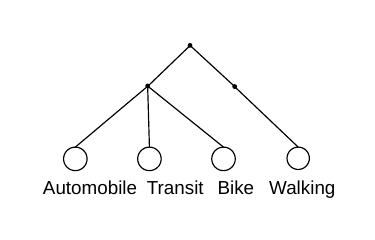}
    \caption{nesting structure 1}
    \label{sfig:0001nest}
\end{subfigure}
\begin{subfigure}[b]{0.24\linewidth}
    \includegraphics[width=\linewidth]{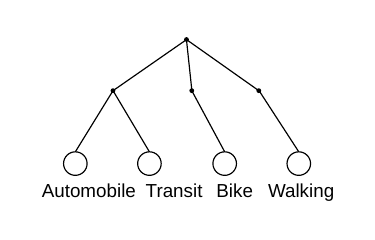}
    \caption{nesting structure 2}
    \label{sfig:0012nest}
\end{subfigure}
\begin{subfigure}[b]{0.24\linewidth}
    \includegraphics[width=\linewidth]{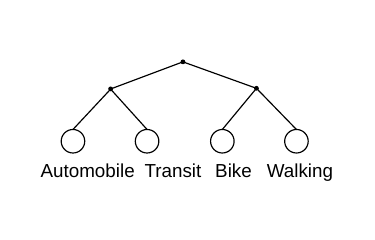}
    \caption{nesting structure 3}
    \label{sfig:0011nest}
\end{subfigure}

\begin{subfigure}[b]{0.24\linewidth}
    \includegraphics[width=\linewidth]{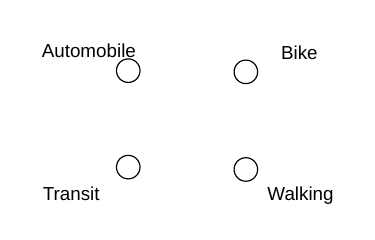}
    \caption{nested alternative graph 0 }
    \label{sfig:nonegraph}
\end{subfigure}
\begin{subfigure}[b]{0.24\linewidth}
    \includegraphics[width=\linewidth]{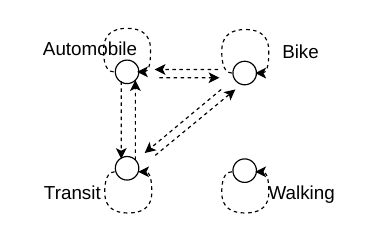}
    \caption{nested alternative graph 1}
    \label{sfig:0001graph}
\end{subfigure}
\begin{subfigure}[b]{0.24\linewidth}
    \includegraphics[width=\linewidth]{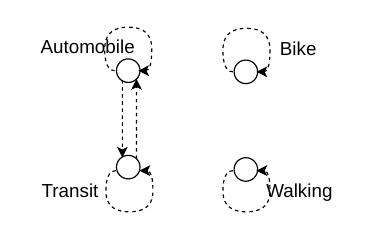}
    \caption{nested alternative graph 2}
    \label{sfig:0012graph}
\end{subfigure}
\begin{subfigure}[b]{0.24\linewidth}
    \includegraphics[width=\linewidth]{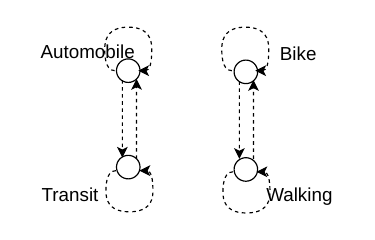}
    \caption{nested alternative graph 3}
    \label{sfig:0011graph}
\end{subfigure}

\caption{Two representations of alternative dependence: nesting structures and alternative graphs}
\label{fig:nesting_structures}

\end{figure}

To better understand the concept of alternative graphs, \cref{fig:nesting_structures} shows four nesting structures in the NL model and their corresponding alternative graphs, referred to as nested alternative graphs.
In nesting structures, alternatives are grouped into nests, while in alternative graphs, alternatives are represented as nodes and their relationships are captured by edges. The alternative graph is constructed by connecting alternatives within the same nest. For example, in nesting structure 1 (Figure~\ref{sfig:0001nest}), automobile, transit, and bike are grouped into the same nest, while walking into another nest. In the corresponding nested alternative graph 1 (Figure~\ref{sfig:0001graph}), the three nodes of automobile, transit, and bike form a fully connected subgraph, while walking forms its own subgraph. Similarly, the nested alternative graphs 2-3 are constructed with the same approach. A unique case is nested alternative graph 0 without any edge (Figure~\ref{sfig:nonegraph}). This corresponds to nesting structure 0, which represents four independent alternatives as disconnected nodes. The nesting structures and nested alternative graphs have clear bi-directional mapping because they can be transformed to each other precisely. 

\begin{figure}[htbp]
\centering
\captionsetup[subfigure]{font=scriptsize,skip=2pt}

\begin{subfigure}[b]{0.45\linewidth}
    \centering
    \includegraphics[width=\linewidth]{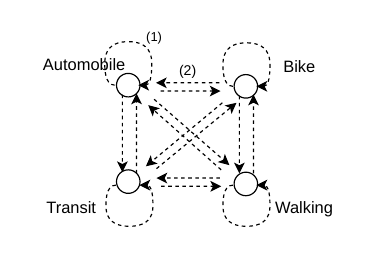}
    \caption{complete alternative graph}
    \label{sfig:complete_graph}
\end{subfigure}
\hfill
\begin{subfigure}[b]{0.45\linewidth}
    \centering
    \includegraphics[width=\linewidth]{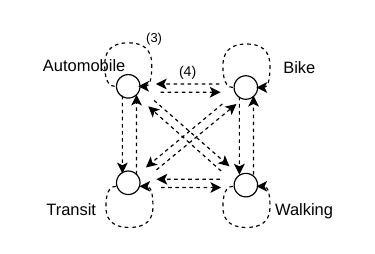}
    \caption{attention alternative graph}
    \label{sfig:attention_graph}
\end{subfigure}

\caption{Complete and attention-based alternative graphs. In the message aggregation stage, edge 1 denotes the message from alternative $i$ to itself, $\exp(\mathrm{MLP}(x_{ni}^{(0)}))$;
edge 2 denotes the message passing from alternative $j$ to $i$, 
$\exp(\mathrm{MLP}(x_{nj}^{(0)}))$; edge 3 denotes the attention-weighted message from alternative $i$ to itself, $\alpha_{nii}\exp(\mathrm{MLP}(x_{ni}^{(0)}))$; edge
4 denotes the attention-weighted message from alternative $j$ to $i$, $\alpha_{nij}\exp(\mathrm{MLP}(x_{nj}^{(0)}))$.}
\label{fig:complete_attention_graph_compare}
\end{figure}

The concept of alternative graph enables us to initialize structures beyond predefined nests and freely learn the weights of alternative dependence from data. For example, Figure~\ref{sfig:complete_graph} visualizes a graph with all the nodes connected. This complete alternative graph (Figure~\ref{sfig:complete_graph}) serves as the most 
general structure, where the graph is fully connected with uniform edge weights, i.e., $a_{ij} = 1$ for all $i, j \in \mathcal{V}$. In fact, the nested alternative graphs can be viewed as special cases with restricted edge patterns: the edge weights $a_{ij}$ are 
constrained to binary values determined by the nesting structure. If the nesting structure reflects the underlying data generating process, then such a structure shall be learnt from the data. On top of complete graph, we further introduce an attention alternative graph (Figure~\ref{sfig:attention_graph}), which retains the full connectivity of the complete graph but replaces the uniform weights with individualized weights learned from data through an attention mechanism, where $a_{ij}$ takes continuous values in $[0,1]$, enabling more flexible and data-driven interaction patterns.


The representation of the alternative graph has advantages over classical concepts for at least three reasons. 
First, they provide a more general representation of dependence structures. Nesting structures impose a predefined hierarchical grouping with symmetric interactions, while alternative graphs allow non-nested and directional connections between alternatives. Second, alternative graphs support continuous edge weights and can vary across individuals. Automatic learning of alternative graphs allow interaction strengths to vary continuously and capture heterogeneous dependence patterns across individuals. Third, from a representation perspective, alternative graphs provide a more direct description of the message passing process than nesting structures. As shown in Figure~\ref{fig:complete_attention_graph_compare}, alternative graphs encode dependence through edges, where each edge represents a message in the aggregation step, allowing the contribution of each alternative to be interpreted through edge weights.

\begin{definition}
An alternative graph neural network (Alt-GNN) applies a message passing algorithm to an alternative graph, where each node represents an alternative and edges encode potential interactions among alternatives. 
The model is characterized by a set of hyperparameters
$\mathcal{H} = \{L, M, A, U, R\}$, 
corresponding to the number of layers, and message, aggregation, update, and readout functions. 
\end{definition}
Let $\mathcal{N}(i) \subseteq \mathcal{V}$ denote the neighborhood of node $i$,
defined as $\mathcal{N}(i) = \{\, j \in \mathcal{V} \mid \alpha_{ij} > 0 \,\},$
that is, node $j$ is included in $\mathcal{N}(i)$ if there exists a directed edge from $j$ to $i$.
Let $t = 0, 1, \dots, L$ index the message passing layer. A general message passing algorithm for layer $(t+1)$ consists of three steps:
\begin{align*}
\mathbf{m}_{i,j}^{(t + 1)} &= M^{(t)}\bigl(x_{i}^{(t)}, x_j^{(t)}, e_{i,j}\bigr), \\
\mathbf{a}_{i}^{(t + 1)} &= A \left( \left\{ \mathbf{m}_{i,j}^{(t+1)} \mid j \in \mathcal{N}(i) \right\} \right), \\
x_{i}^{(t+1)} &= U \left(x_{i}^{(t)}, \mathbf{a}_{i}^{(t+1)} \right).
\end{align*}

The three functions serve three goals. The first equation creates an edge message $\mathbf{m}_{i,j}^{(t + 1)}$ by combining two node features; the second equation aggregates the edge messages $\mathbf{m}_{i,j}^{(t + 1)}$ around the neighborhoods of node $i$; the third equation update the node feature $x_{i}^{(t)}$ as $x_{i}^{(t+1)}$ by integrating the initial node feature $x_{i}^{(t)}$ and the aggregated message $\mathbf{a}_{i}^{(t + 1)}$. The algorithm is initialized with node features $x_i^{(0)}$ and ends with $x_i^{(L)}$ after $L$-layer updating. The final node feature $x_i^{(L)}$ could be used for predicting output $y_i$ with a readout function $R(x_i^{(L)})$, which  can take a simple linear form (e.g., $y_i = w^{\top} x_i^{(L)}$). This message passing algorithm can successfully aggregate the graph information into individual nodes, and it has been shown to generalize many existing GNN algorithms.

\subsection{Variants of Alt-GNNs}
\label{sec:methodology_alt_gnn_variants}
Although the Alt-GNN framework is built upon relatively standard message passing algorithms, it introduces a rich and flexible family of models for choice modeling. In particular, Alt-GNN provides a unifying representation that recovers existing models, including MNL, NL, and ASU-DNN, as special cases, demonstrating its ability to bridge established GEV models with GNN architectures. Building on this unified foundation, we further introduce new Alt-GNN variants by leveraging alternative graphs to encode alternative dependence. These Alt-GNN variants include Nested, Complete, and Attention Alt-GNNs, illustrating how alternative graph design can expand the expressive power of Alt-GNNs.

\begin{example}
MNL model is a unique case of Alt-GNNs when the hyperparameter space $\mathcal{H}$ takes the following form: $\{L = 0,\, M_{i}(\cdot) = \emptyset,\; A(\cdot) = \emptyset,\; U(\cdot) = \emptyset,\; R(x_i) = w_i^{\top}x_i^{(0)}  \}$.
\end{example}

The MNL model can be interpreted as a degenerate Alt-GNN with no message passing layers using nested alternative graph 0 as shown in Figure \ref{sfig:nonegraph}. The utility is obtained directly from the initial node features through a linear readout function,
without any aggregation or update operations. In MNL, the utility for alternative $i$ and individual $n$, together with the corresponding choice probability, are given by:
\begin{equation}
V_{ni} = w_i^\top x_{ni},
\end{equation}
\begin{equation}
P_{ni} =
\frac{\exp(w_i^\top x_{ni})}
{\sum_{j \in \mathcal V} \exp(w_j^\top x_{nj})}.
\label{eq:gnn_probability_mnl}
\end{equation}

\begin{example}
ASU-DNN model is a unique case of Alt-GNNs when the hyperparameter space $\mathcal{H}$ takes the following form:
$\{ L = 0,\; M_{i,j}(\cdot) = \emptyset,\; A(\cdot) = \emptyset,\; U(\cdot) = \emptyset,\; R_i(x_i^{(0)}) = \mathrm{MLP}_i(x_i^{(0)}) \}$.
\end{example}

The ASU-DNN model also corresponds to a degenerate Alt-GNN using the alternative graph in Figure \ref{sfig:nonegraph}.
Different from the linear utility specification in the MNL model, the readout function here is given by an alternative-specific multilayer perceptron, which is the same as \cite{wangDeepNeuralNetworks2020a}'s work. 
The utility and the corresponding choice probability functions are given by:
\begin{equation}
V_{ni} = \mathrm{MLP}_i(x_{ni}),
\end{equation}
\begin{equation}
P_{ni} =
\frac{\exp(\mathrm{MLP}_i(x_{ni}))}
{\sum_{j \in \mathcal V} \exp(\mathrm{MLP}_j(x_{nj}))}.
\end{equation}
In both variants, the MNL and ASU-DNN models follow the IIA constraint and thus exhibit the proportional substitution pattern among alternatives. 

\begin{example}
NL model is a unique case of Alt-GNNs when the hyperparameter space $\mathcal{H}$ takes the following form: $L = 1$, $M_{i,j}(x_j^{(0)}) = w_j^\top x_j^{(0)} \mu_k$,
$A(m_i) = (\mu_k - 1) LSE_{j \in \mathcal{N}_(i)}(M_{i,j}(x_j^{(0)}))$,
$U(x_i, a_i) = M_{i,i}(x_i^{(0)}) + A(m_i)$, $R(x^{(1)}_i) = x_i^{(1)}$.
\label{example:gnn_nl}
\end{example}
The NL model can also be decomposed into standard Alt-GNN steps, with the alternative graph specified as Figures \ref{sfig:0001graph}, \ref{sfig:0011graph}, or \ref{sfig:0012graph}. 
An edge message is first constructed using a linear transformation,
$M_{i,j}(x_j^{(0)}) = w_j^\top x_j^{(0)}$.
Then the edge messages are aggregated using the log-sum-exponential (LSE) function over the neighborhood of the target node $i$, as $A(m_i) = (\mu_k - 1)\,\mathrm{LSE}_{j \in \mathcal{N}(i)}(M_{i,j}(x_j^{(0)}))$.
Third, the node feature $x_i$ is updated by combining the self-message $M_{i}(x_i^{(0)})$ and the aggregated neighborhood message $A(m_i)$, as $U(x_i, a_i) = M_{i}(x_i^{(0)}) + A(m_i)$, where $M_{i,i}(x_i^{(0)}) = w_i^\top x_i^{(0)} $ gives the self contribution of alternative $i$. Finally, the updated node representation $x_i^{(1)}$ is used directly as a utility value to calculate the choice probability. Combining these steps, the utility value and the choice probability are given by:

\begin{equation}
    V_{ni} = w_i^{\top}x_{ni}/\mu_k + ((\mu_k - 1) LSE_{j \in \mathcal{N}_(i)}(w_j^{\top}x_{nj}/\mu_k)),
\end{equation}
\begin{equation}
 P_{ni} = \frac{\exp \left( w_i^{\top}x_{ni}/\mu_k + ((\mu_k - 1) LSE_{j \in \mathcal{N}_(i)}(w_j^{\top}x_{nj}/\mu_k) \right)}{\sum_{m \in \mathcal{V}} \exp \left( w_m^{\top}x_{nm}/\mu_l + ((\mu_l - 1) LSE_{j \in \mathcal{N}_(i)}(w_j^{\top}x_{nj}/\mu_l) \right)}.
\label{eq:nsu_gnn_probability_nl}
\end{equation}

\begin{example}
Nested Alt-GNN can be specified as an example of Alt-GNN with high-dimensional message passing along edges when its hyperparameter space $\mathcal{H}$ takes the following form:
$L = 1,\;
M_{i,j}(x_j^{(0)}) = W_j^\top x_j^{(0)},\;
A(m_i) = LSE_{j \in \mathcal{N}(i)}(M_{i,j}(x_j^{(0)})),\;
U(x_i, a_i) = CONCAT(M_{i}(x_i^{(0)}), a_i),\;
R(x_i^{(1)}) = w_i^\top x_i^{(1)}.$
\label{example:gnn_high_dim_message}
\end{example}
Here we introduce new Alt-GNN variants that do not exist in past modeling practices. In contrast to the standard NL model, the Nested Alt-GNNs extend message passing to a high-dimensional feature space with flexible transformation and aggregation functions. The message transformation function $M_{i,j}(\cdot)$ may take either a linear form, such as $M_{i,j}(x_j^{(0)}) = W_j^\top x_j^{(0)}$, or a nonlinear form implemented by a multi-layer perceptron, like $M_{i,j}(x_j^{(0)}) = \mathrm{MLP}_j(x_j^{(0)})$. The aggregation is also performed over the alternative graph structures illustrated in Figure \ref{sfig:0001graph}, Figure \ref{sfig:0011graph}, or Figure \ref{sfig:0012graph}. However, different from NL model, here the LSE aggregation is applied in an element-wise manner, thus enabling the message passing in a high-dimensional space. The update function concatenates the transformed input feature of node $i$ and its edge message $a_i$, incorporating both individual and contextual information. Only in the final readout function is the node feature $x_i$ reduced to one dimension for computing choice probabilities. The utility and the choice probability functions of an example Nested Alt-GNN are:

\begin{equation}
    V_{ni} = w_{1i}^{\top}(W^{\top}x_{ni}) + w_{2i}^{\top} LSE_{j \in \mathcal{N}_(i)}(W^{\top}x_{nj}),
 \label{equation:utility_nested_alt_gnn}    
\end{equation}
\begin{equation}
 P_{ni} = \frac{\exp \left( w_{1i}^{\top}(W^{\top}x_{ni}) + w_{2i}^{\top} LSE_{j \in \mathcal{N}_(i)}(W^{\top}x_{nj}) \right)}{\sum_{m \in \mathcal{V}} \exp \left( w_{1m}^{\top}(W^{\top}x_{nm}) + w_{2m}^{\top} LSE_{j \in \mathcal{N}_*(m)}(W^{\top}x_{nj}) \right)}.
 \label{equation:choice_prob_nested_alt_gnn}
\end{equation}


\begin{example}
A Complete Alt-GNN with MLP readout is a unique case of Alt-GNNs when the hyperparameter space $\mathcal{H}$ takes the following form:
$
L = 1,\;
M_{i,j}(x_j^{(0)}) = \mathrm{MLP}^{(M)}(x_j^{(0)}),\;
A(m_i) = \mathrm{LSE}_{j \in \mathcal{V}}\!\left(M_{i,j}(x_j^{(0)})\right),
U(x_i, a_i) = \mathrm{CONCAT}\!\left(M_{i}(x_i^{(0)}),\, a_i\right),\;
R(x_i^{(1)}) = \mathrm{MLP}^{(R)}(x_i^{(1)}).$
\label{example:completegnn}
\end{example}
In the complete-graph GNN, message aggregation is performed over all alternatives, meaning that each alternative aggregates messages from all other alternatives in the choice set. 
This corresponds to a fully connected graph without predefined nesting structures. 
As illustrated in Figure \ref{sfig:complete_graph}, message passing occurs along all pairwise paths between alternatives. In addition, the readout function is no longer restricted to be linear or shared within nests, and is instead implemented as a node-specific multi-layer perceptron.
This specification allows unrestricted interactions among alternatives and serves as a general benchmark without substitution constraints. The utility and the choice probability functions of the Complete Alt-GNNs are:

\begin{equation}
V_{ni}=\mathrm{MLP}^{(R)}_{i}\!\Big(\mathrm{CONCAT}\big(\mathrm{MLP}^{(M)}(x_{ni}),\;\mathrm{LSE}_{j \in \mathcal{V}}\!\big(\mathrm{MLP}^{(M)}(x_{nj})\big)\big)\Big),
\end{equation}

\begin{equation}
P_{ni}
=
\frac{\exp\!\Big(
\mathrm{MLP}^{(R)}_{i}\!\Big(
\mathrm{CONCAT}\big(
\mathrm{MLP}^{(M)}(x_{ni}),
\;
\mathrm{LSE}_{j \in \mathcal{V}}\!\big(\mathrm{MLP}^{(M)}(x_{nj})\big)
\big)
\Big)
\Big)}
{\sum_{m \in \mathcal{V}} \exp\!\Big(
\mathrm{MLP}^{(R)}_{m}\!\Big(
\mathrm{CONCAT}\big(
\mathrm{MLP}^{(M)}(x_{nm}),
\;
\mathrm{LSE}_{j \in \mathcal{V}}\!\big(\mathrm{MLP}^{(M)}(x_{nj})\big)
\big)
\Big)
\Big)
}.
\label{eq:completegnn_probability}
\end{equation}

\begin{example}
Attention Alt-GNN with MLP readout is a unique case of Alt-GNNs when the  hyperparameter space $\mathcal{H}$ takes the following form:
$L=1,\;
M_{i,j}(x_j^{(0)})=\mathrm{MLP}(x_j^{(0)}),\;
A(m_i)=\log\sum_{j\in\mathcal{V}}\alpha_{nij}\exp\!\big(M_{i,j}(x_j^{(0)})\big),\;
U(x_i,a_i)=\mathrm{CONCAT}\!\big(M_{i,i}(x_i^{(0)}),a_i\big),\;
R(x_i^{(1)})=\mathrm{MLP}^{(R)}(x_i^{(1)})$,
where $\alpha_{nij}$ is computed by an attention mechanism on projected node features.
\label{example:attentiongnn}
\end{example}

Beyond the complete-graph specification (Figure \ref{sfig:complete_graph}), we further consider an attention-based GNN (Figure \ref{sfig:attention_graph}) that learns importance weights over alternative edges from data.
Specifically, attention coefficients $\alpha_{nij} \in [0,1]$ are computed for each individual $n$ and each ordered pair of alternatives $(i,j)$ to modulate the contribution of alternative $j$ to the aggregated message of alternative $i$, where
\begin{equation}
\alpha_{nij}
=
\frac{\exp\!\left(\mathrm{LeakyReLU}\!\left(a^\top [\,h_i \,\|\, h_j\,]\right)\right)}
{\sum_{k \in \mathcal{V}} \exp\!\left(\mathrm{LeakyReLU}\!\left(a^\top [\,h_i \,\|\, h_k\,]\right)\right)}.
\label{attention_weight}
\end{equation}
Given a fixed message function $M^{\text{base}}_{i,j}(x_j^{(0)})$, attention weights are applied at the aggregation stage, such that the aggregated message is formed as an attention-weighted aggregation over all alternatives.
This formulation allows the model to learn behaviorally important interactions between alternatives while down weighting less relevant ones in a data-driven manner.
The utility and the choice probability functions of Attention Alt-GNNs are:

\begin{equation}
V_{ni}
=
\mathrm{MLP}^{(R)}_{i}\!\Big(
\mathrm{CONCAT}\big(
\mathrm{MLP}(x_{ni}^{(0)}),
\;
\log \sum_{j \in \mathcal{V}}
\alpha_{nij}\,
\exp\!\big(\mathrm{MLP}(x_{nj}^{(0)})\big)
\big)
\Big),
\end{equation}
\begin{equation}
P_{ni}
=
\frac{\exp\!\Big(
\mathrm{MLP}^{(R)}_{i}\!\Big(
\mathrm{CONCAT}\big(
\mathrm{MLP}(x_{ni}^{(0)}),
\;
\log \sum_{j \in \mathcal{V}}
\alpha_{nij}\,
\exp\!\big(\mathrm{MLP}(x_{nj}^{(0)})\big)
\big)
\Big)
\Big)}
{\sum_{m \in \mathcal{V}}
\exp\!\Big(
\mathrm{MLP}^{(R)}_{m}\!\Big(
\mathrm{CONCAT}\big(
\mathrm{MLP}(x_{nm}^{(0)}),
\;
\log \sum_{j \in \mathcal{V}}
\alpha_{nmj}\,
\exp\!\big(\mathrm{MLP}(x_{nj}^{(0)})\big)
\big)
\Big)
\Big)}.
\label{eq:attentiongnn_probability}
\end{equation}

Overall, the six Alt-GNN variants demonstrate the flexibility of the design space in the Alt-GNN model family.
The first three variants demonstrate how the Alt-GNN models reduce to the existing MNL, NL, and ASU-DNN models.  The Nested Alt-GNN extends NL model by allowing more flexible aggregation functions and readout functions in higher-dimensional representations while preserving the nested structure and substitution patterns (See details in Section \ref{sec:methodology_beh_constraints}).
In contrast, the Complete Alt-GNN specifications relax the structural and readout constraints imposed by Nested Alt-GNN, allowing unrestricted interactions among alternatives and serving as general-purpose baselines without substitution restrictions. Attention Alt-GNN further learns individual-specific edge weights, allowing the strength of interactions to vary across observations. 

\subsection{Interpretation of Alt-GNNs}
\label{sec:methodology_alt_gnn_interpretability}

\subsubsection{Validity of behavioral interpretation in Alt-GNNs}

Alt-GNNs enable valid behavioral interpretation under the random utility maximization (RUM) framework due to the shared softmax activation functions existing in both DNNs and DCMs. Let $V(\mathbf{x}_{ni},\mathbf{z}_n)$ denote the deterministic utility of alternative $i$ for individual $n$, where $\mathbf{x}_{ni}$ represents alternative-specific attributes and $\mathbf{z}_n$ individual characteristics. The total utility is defined as
$U_{ni}=V(\mathbf{x}_{ni},\mathbf{z}_n)+\epsilon_{ni},$
where $\epsilon_{ni}$ captures unobserved factors. Under the RUM framework \citep{manskiStructureRandomUtility1977,mcfaddenConditionalLogitAnalysis1974}, the probability that individual $n$ chooses alternative $i$ is
$P_{ni}=\text{Prob}(V_{ni}+\epsilon_{ni}>V_{nj}+\epsilon_{nj}, \forall j\in\mathcal{V}, j\neq i).$
When $\epsilon_{ni}$ follows an i.i.d.\ Gumbel distribution, the resulting choice probability takes the softmax form
$P_{ni}=\frac{e^{V_{ni}}}{\sum_{j\in\mathcal{V}} e^{V_{nj}}}.$
As shown by \citep{mcfaddenConditionalLogitAnalysis1974, wangDeepNeuralNetworks2020}, the softmax formulation provides a necessary and sufficient condition for consistency with the RUM framework. Since Alt-GNNs adopt the same softmax mapping from utilities to choice probabilities, they inherit the behavioral interpretation grounded in RUM.

However, the Alt-GNNs differ from the MNL baseline since the utility specification of an alternative $i$ in Alt-GNNs always includes other alternatives' information. The closest framework in the DCM literature is the Mother Logit model \citep{timmermansMotherLogitAnalysis1991}, which allows the utility of an alternative to depend on attributes of other alternatives. In Mother Logit, the utility of alternative $i$ is defined as
$U_i = V_i + \sum_{j \neq i} \gamma_{ij} g(x_{nj}, z_n) + \varepsilon_i,$
where $V_i$ is the baseline utility determined by the attributes of alternative $i$, $g(x_{nj}, z_n)$ represents the utility contribution from alternative $j$ to alternative $i$, and $\gamma_{ij}$ captures cross-alternative effects. Assuming $\{\varepsilon_i\}$ follow an i.i.d.\ Type-I Extreme Value distribution, the resulting choice probability again takes the softmax form
\begin{equation}
P_{ni} =
\frac{\exp\!\left( V_{ni} + \sum_{j \neq i} \gamma_{ij} g(x_{nj}, z_n) \right)}
{\sum_{m \in \mathcal{V}} \exp\!\left( V_{nm} + \sum_{j \neq m} \gamma_{mj} g(x_{nj}, z_n) \right)}.
\label{equation:mother_logit}
\end{equation}

Under the alternative-graph perspective, the Mother Logit model can be interpreted as operating on a fully connected alternative graph, where each alternative receives utility contributions from all other alternatives. In fact, this perspective also applies to the function forms of GEV models. As shown by \cref{eq:nsu_gnn_probability_nl}, the utility functions of NL models can also be interpreted as receiving utility contributions from the node itself and all others within a nest. This perspective provides an intuitive bridge between classical discrete choice formulations and the Alt-GNN framework, justifying the validity of a RUM-consistent behavioral interpretation. 

\subsubsection{Exerting behavioral constraints over Alt-GNNs}
\label{sec:methodology_beh_constraints}
While many studies found that DNNs present more flexible economic information than classical DCMs, a major challenge resides in how to exert behavioral constraints and prior knowledge over such economic information to avoid overly irregular economic patterns. The ASU-DNN model presents one example but by adopting an overly restrictive approach, as the researchers imposed the full IIA constraint to DNN. Here the Alt-GNNs leverage the alternative graph and message passing algorithms to impose behavioral constraints. To demonstrate the process, here we use the Nested Alt-GNN as a leading example with its general utility and choice probability functions as:
\begin{equation}
    V_{ni} = \phi(x_{ni}) + A( \left\{ \mathbf{m}_{i,j}^{(1)}, \forall j \in \mathcal{N}(i) \right\}),
\end{equation}
\begin{equation}
 P_{ni} = \frac
 {\exp \left( \phi(x_{ni}) + A( \left\{ \mathbf{m}_{i,j}^{(1)}, \forall j \in \mathcal{N}(i) \right\}) \right)}
 {\sum_{m \in \mathcal{V}} \exp \left( \phi(x_{nm}) + A( \left\{ \mathbf{m}_{i,m}^{(1)}, \forall j \in \mathcal{N}(i) \right\}) \right)}.
\label{eq:nsu_gnn_probability}
\end{equation}

This Nested Alt-GNN model is characterized by its additive nest utility with a general aggregation function $A$. The first term represents the alternative $i$'s self-utility and the second term the nest utility from $i$'s neighbors. Since each subnetwork in \cref{fig:nesting_structures} is fully connected within each nest, even one-step neighborhood aggregation here can summarize all the information within a nest. Different from NL, the Nested Alt-GNN has a more general aggregation function $A$, rather than being limited to the LSE function only. Nonetheless, the rather general Nested Alt-GNNs can impose behavioral constraints by replicating the unique substitution patterns in the classical NL model, as shown by the property below.  

\begin{property_Nested_Alt-GNN}
The Nested Alt-GNN model is characterized by its two-layer substitution patterns: proportional substitution of alternatives $i$ and $j$ within every subnetwork $B_k$ when $i, j \in B_k$, and disproportional substitution of alternatives $i$ and $j$ across subnetworks $B_k$ and $B_l$ when $i \in B_k$ and $j \in B_l$. 
\end{property_Nested_Alt-GNN}

The mathematical proof is straightforward. When two alternatives $i, j \in B_k$, the ratio of choice probabilities 
\begin{equation}
\frac{P_{ni}}{P_{nj}} = \frac{e^{\phi(x_{ni})}}{e^{\phi(x_{nj})}},
\label{eq:sub_gnn_1}
\end{equation}
which does not depend on any alternatives other than $i$ and $j$. When $i \in B_k$ and $j \in B_l$, the ratio of choice probabilities equals to: 
\begin{equation}
\frac{P_{ni}}{P_{nj}} = \frac
 {\exp \left( \phi(x_{ni}) + A( \left\{ \mathbf{m}_{i,m}^{(1)}, \forall m \in \mathcal{N}(i) \right\}) \right)}
 {\exp \left( \phi(x_{nj}) + A( \left\{ \mathbf{m}_{j,m}^{(1)}, \forall m \in \mathcal{N}(j) \right\}) \right)}.
\label{eq:sub_gnn_2}
\end{equation}

This property highlights how Nested Alt-GNNs enable researchers to impose behavioral constraints through the design of alternative graph. In fact, it resembles the hallmark of the NL model, which also presents the two-layer substitution pattern (See details in Appendix \ref{sec:appendix_substitution_nl}). This property demonstrates the structural controls on the Alt-GNNs: Nested Alt-GNN resembles the classical NL so that its regularity is enhanced comparing to a baseline DNN, and it can generate more interpretable results than unconstrained DNNs. 

Despite the similar substitution patterns between NL and Nested Alt-GNN, we can also identify their crucial differences. First, in the NL model, the message passed from alternative $j$ to alternative $i$ is restricted to a predefined linear form. The aggregated message $A(m_{i,j})$ corresponds to the utility component $w_j^\top x_{nj} / \mu_k$, which is fully determined by the specified utility function. In contrast, Nested Alt-GNN allows the message function to be parameterized by a learnable nonlinear mapping, such as a multilayer perceptron, enabling more flexible transformations of alternative features during message passing. Second, the NL model uses a fixed LSE form for neighborhood aggregation, different from the typical min, max, and sum aggregation functions in a typical GNN model. Lastly, the NL model is limited to a one-layer GNN aggregation, rather than repeatedly using GNN aggregations for at least two or three steps, as commonly done in GNN models. Therefore, in our empirical experiments, we anticipate higher performance of Nested Alt-GNNs over NL models, while the Nested Alt-GNNs maintaining the two-layer substitution patterns as behavioral constraint imposed through our design of alternative graph.

\section{Experiment Design}
\label{sec:experiment_setup} 
\subsection{Datasets from Chicago and London} 
We use two travel survey datasets from Chicago and London to empirically test Alt-GNNs. The CMAP dataset is derived from My Daily Travel Survey \citep{HouseholdTravelSurvey} and trips are constructed from consecutive timestamped activity locations in the survey activity table. Travel times are obtained from Google Directions API queries for the corresponding origin–destination pairs, adopting the data constructed in \cite{fengDeepNeuralNetworks2024} and supplementing it with additional queries. Travel cost variables contain missing values and are imputed using a K-nearest neighbors imputation approach before model estimation. The LPMC dataset provides a fully constructed mode choice dataset derived from the London Travel Demand Survey \citep{hillelRecreatingPassengerMode2018}. In contrast to the CMAP dataset, the LPMC dataset directly includes pre-constructed alternative-specific level-of-service attributes obtained via an online journey planner, as well as cost variables for public transit and driving. 

These two datasets represent complementary travel behavior contexts, differing in geographic setting (Europe vs. U.S.), traveler characteristics, travel features, and market shares. To demonstrate the details, we select age, gender, number of vehicles in the household, and household size as individual-level variables, and travel time and travel cost as alternative-specific attributes. The observed choice is one of four alternatives: automobile, transit, walking, and biking. The two datasets have similar age and gender distributions, but differ in household structure and vehicle ownership, with CMAP showing higher car ownership and LPMC having larger households. As shown in Table \ref{tab:summary_stats_twopanel}, the age distributions are similar across the two cities, with mean ages around 39 years. The gender composition is also comparable, with approximately 46\% male travelers in both samples. However, CMAP households exhibit higher vehicle ownership (mean = 1.52) compared to LPMC (mean = 0.98), reflecting stronger automobile dependence. Household size is larger in London (mean = 2.43) than in Chicago (mean = 1.51).
\begin{table}[htbp]
\centering
\small
\caption{Summary Statistics of LPMC and CMAP Samples (N = 10{,}000 each)}
\label{tab:summary_stats_twopanel}
\resizebox{\linewidth}{!}{%
\begin{tabular}{l|rrrrrrr|rrrrrrr}
\toprule
 & \multicolumn{7}{c|}{LPMC} & \multicolumn{7}{c}{CMAP} \\
Variable 
& Mean & Std & Min & 25\% & 50\% & 75\% & Max
& Mean & Std & Min & 25\% & 50\% & 75\% & Max \\
\midrule
Age (years)
& 39.56 & 19.15 & 5.00 & 26.00 & 38.00 & 52.00 & 94.00
& 39.02 & 13.15 & 6.00 & 29.00 & 37.00 & 47.00 & 84.00 \\

Male (0/1)
& 0.46 & 0.50 & 0.00 & 0.00 & 0.00 & 1.00 & 1.00
& 0.46 & 0.49 & 0.00 & 0.00 & 0.00 & 1.00 & 1.00 \\

Number of vehicles
& 0.98 & 0.75 & 0.00 & 0.00 & 1.00 & 2.00 & 2.00
& 1.52 & 1.03 & 0.00 & 1.00 & 2.00 & 2.00 & 8.00 \\

Household size
& 2.43 & 1.25 & 1.00 & 1.00 & 2.00 & 3.00 & 9.00
& 1.51 & 1.03 & 0.00 & 1.00 & 2.00 & 2.00 & 8.00 \\

\midrule
Transit time (min)
& 27.86 & 18.61 & 1.00 & 13.45 & 23.28 & 38.35 & 141.83
& 67.21 & 93.39 & 4.22 & 23.71 & 40.60 & 74.14 & 961.07 \\

Walking time (min)
& 67.59 & 66.68 & 1.72 & 20.97 & 42.88 & 90.29 & 528.27
& 13.75 & 2.00 & 3.18 & 7.76 & 7.53 & 15.97 & 513.42 \\

Driving time (min)
& 16.90 & 15.08 & 0.53 & 6.42 & 11.44 & 22.11 & 120.65
& 16.55 & 36.59 & 1.00 & 5.00 & 9.00 & 20.00 & 1340.00 \\

Biking time (min)
& 21.70 & 21.02 & 0.43 & 6.95 & 13.85 & 29.16 & 160.35
& 42.31 & 62.88 & 1.12 & 11.91 & 23.23 & 49.69 & 1876.35 \\

\midrule
Transit cost (GBP / USD)
& 1.55 & 1.51 & 0.00 & 0.00 & 1.50 & 2.40 & 10.49
& 2.56 & 2.18 & 0.00 & 2.46 & 2.56 & 2.69 & 10.00 \\

Driving cost (GBP / USD)
& 1.87 & 3.45 & 0.02 & 0.28 & 0.57 & 1.28 & 16.36
& 15.39 & 3.44 & 1.91 & 12.36 & 15.03 & 18.82 & 43.28 \\

\bottomrule
\end{tabular}
}
\end{table}

The two datasets also differ in travel characteristics and market shares. Chicago has a longer average transit travel time (67.21 minutes) compared to London (27.86 minutes), while driving times are of similar magnitude. Walking time is higher in London, whereas biking time is higher in Chicago. The definition of driving cost differs between the two datasets. In the CMAP dataset, driving cost reflects the total toll payments and parking fees incurred during the trip. In contrast, in the LPMC dataset, driving cost is computed as the sum of estimated fuel cost and congestion charge cost (in GBP) for each route. As shown in Table \ref{tab:market_share_transposed}, Chicago is dominated by automobile use (71.77\%), whereas London shows a more balanced distribution, with 44.97\% automobile and 34.67\% transit trips. These patterns indicate distinct travel characteristics in the two cities, which motivate the experiment using both to test the robustness of Alt-GNNs.

\begin{table}[htbp]
\centering
\small
\caption{Market Share by Travel Mode}
\label{tab:market_share_transposed}
\begin{tabular}{l|cccc}
\toprule
Dataset & Automobile & Transit & Walking & Biking \\
\midrule
LPMC Data
& 44.97\% & 34.67\% & 17.42\% & 2.94\% \\
CMAP Data
& 71.77\% & 9.55\% & 17.36\% & 1.32\% \\

\bottomrule
\end{tabular}
\end{table}

\subsection{Specifying hyperparameter space of Alt-GNNs}
We hypothesize that the alternative graph will serve as the most important hyperparameter influencing the predictive performance and behavioral interpretation of Alt-GNNs. Therefore, we examine four nested alternative graphs for Nested Alt-GNNs by assigning the four travel modes—automobile, transit, bike, and walking to different subgraphs (\cref{fig:nesting_structures}). For example, nested alternative graph 0 represents a baseline structure with no nesting across alternatives. Each alternative forms an independent node, and no edges connect alternatives, as shown in \cref{sfig:nonenest} and \cref{sfig:nonegraph}. Other nested alternative graphs follow the design in \cref{sfig:0001graph}, \cref{sfig:0012graph}, and \cref{sfig:0011graph}. We also test Complete Alt-GNN, which relaxes the nesting constraints by fully connecting all alternatives (see \cref{sfig:complete_graph}), and Attention Alt-GNN, which automatically learns a graph structure and edge weights from data. 

Besides graph structures, we conducted an exhaustive grid search over the hyperparameters of Alt-GNNs, including message function, aggregation function, update function, readout function, number of message passing layers, and hidden units. All other hyperparameters, including the number of training epochs, learning rate, and batch size, were kept constant across experiments at 100, 0.001, and 64, respectively. For model training, each dataset is randomly partitioned into training and testing subsets using an 80/20 split. For comparison, we also evaluated several benchmark and baseline models. These include MNL, NL, and ASU-DNN models, which can be viewed as special cases of the Nested Alt-GNN framework with restricted hyperparameter configurations, as illustrated in \cref{sec:methodology_alt_gnn_variants}. As summarized in \cref{tab:model_config_summary}, a total of 259 model specifications were evaluated. These include 108 Nested Alt-GNN configurations under three graph structures, 72 Complete Alt-GNNs, 72 Attention Alt-GNNs, three ASU-DNNs, one MNL model, and three NL models under different nesting structures.

\begin{table}[htbp]
\centering
\small
\caption{Summary of model configurations in the experiments}
\begin{tabular}{llc}
\hline
\textbf{Model Type} & \textbf{Hyperparameter Configuration} & \textbf{Total Number} \\
\hline
MNL & - & 1 \\
NL  &  nesting structures 1-3; & 3 \\
ASU-DNN & Hidden units of readout function: 32, 64, 128 & 3 \\
\hline
Nested Alt-GNN & \begin{tabular}[t]{@{}l@{}}
nested alternative graphs 1-3;\\
Message function: \texttt{linear}, \texttt{mlp} \\
Aggregation function: \texttt{mean}, \texttt{logsum}, \texttt{max} \\
Update function: \texttt{concat} \\
Readout function: \texttt{linear} \\
Number of message passing layers: 1, 2 \\
Hidden units (message/state dimension): 32, 64, 128 \\
\end{tabular} & 108 \\
\hline
Complete Alt-GNN & \begin{tabular}[t]{@{}l@{}}
complete alternative graph; \\
Message function: \texttt{linear}, \texttt{mlp} \\
Aggregation function: \texttt{mean}, \texttt{logsum}, \texttt{max} \\
Update function: \texttt{concat} \\
Readout function: \texttt{linear}, \texttt{mlp} \\
Number of message passing layers: 1, 2 \\
Hidden units (message/state dimension): 32, 64, 128 \\
\end{tabular} & 72 \\
\hline
Attention Alt-GNN & \begin{tabular}[t]{@{}l@{}}
attention alternative graph;\\
Message function: \texttt{linear}, \texttt{mlp} \\
Aggregation function: \texttt{mean}, \texttt{logsum}, \texttt{max} \\
Update function: \texttt{concat} \\
Readout function: \texttt{linear}, \texttt{mlp} \\
Number of message passing layers: 1, 2 \\
Hidden units (message/state dimension): 32, 64, 128 \\
\end{tabular} & 72 \\
\hline
\textbf{Total} & -- & \textbf{259} \\
\hline
\end{tabular}
\label{tab:model_config_summary}
\end{table}

\subsection{Model training and evaluation}
The dataset is randomly divided into training and testing subsets with a ratio of 80:20. The training set is used to train model parameters, while the test set is used only for model evaluation. All models are trained by maximizing the log-likelihood function
\begin{equation}
\mathcal{LL}(\theta)
=
\sum_{n=1}^{N}
\sum_{i \in C_n}
y_{ni} \log P_{ni}(\theta),
\end{equation}
where $y_{ni}$ indicates the observed choice and $P_{ni}(\theta)$ is the choice probability defined in the previous section. Here, the training process is more similar to the classical DCMs because individual decision makers maintain the IID structure, which differentiate from the studies considering social network effects \citep{villarragaDesigningGraphConvolutional2025a}.

We adopt various optimization strategies depending on the parameter scale. For models with a small number of parameters, such as MNL and NL, we use the LBFGS optimizer with full-batch estimation, which corresponds to the standard maximum likelihood estimation procedure in DCMs. For Alt-GNNs, we use the Adam optimizer with mini-batch stochastic gradient descent, where the log-likelihood is evaluated on mini-batches of observations. 
This setting improves scalability and training stability for models with many parameters and non-convex objectives. All the training experiments were conducted on the HiPerGator high-performance computing cluster at the University of Florida using GPU acceleration. A complete experiment on one dataset typically takes about two hours on a single GPU.

Model performance is evaluated on the testing set using log-likelihood (LL), prediction accuracy (ACC), and macro F1-score (F1). The log-likelihood is computed using the same formulation as defined in the training objective. Prediction accuracy measures the proportion of correctly predicted choices:
\begin{equation}
ACC =
\frac{1}{N}
\sum_{n=1}^{N}
\mathbf{1}\left(\hat{y}_n = y_n\right),
\end{equation}
where $\hat{y}_n$ denotes the predicted label and $y_n$ denotes the observed choice.
To account for potential class imbalance across alternatives, we also report the macro F1-score
\begin{equation}
F1 =
\frac{1}{|C|}
\sum_{i \in C}
\frac{2 \cdot Precision_i \cdot Recall_i}{Precision_i + Recall_i},
\end{equation}
where
\begin{equation}
Precision_i = \frac{TP_i}{TP_i + FP_i}, \qquad
Recall_i = \frac{TP_i}{TP_i + FN_i}.
\end{equation}
The quantities $TP_i$, $FP_i$, and $FN_i$ are defined as
\begin{equation}
TP_i = \sum_{n=1}^{N} \mathbf{1}(\hat{y}_n = i \land y_n = i),
\end{equation}

\begin{equation}
FP_i = \sum_{n=1}^{N} \mathbf{1}(\hat{y}_n = i \land y_n \neq i),
\end{equation}

\begin{equation}
FN_i = \sum_{n=1}^{N} \mathbf{1}(\hat{y}_n \neq i \land y_n = i),
\end{equation}
where $\hat{y}_n$ denotes the predicted choice and $y_n$ denotes the observed choice for observation $n$.

\section{Results}
\label{sec:results}
Here we investigate to what extent the Alt-GNNs can enhance predictive performance (RQ2) in Section \ref{sec:results_prediction}, impose behavioral constraints and generalize the substitution patterns of benchmark DCMs (RQ3) in Section \ref{sec:results_interpretation_substitution}, and enable graph-based utility interpretation beyond classical GEV models (RQ4) in Section \ref{sec:results_interpretation_graph}. While we conducted experiments using both the LPMC and CMAP datasets, here, for simplicity, we present the results of only the LPMC dataset. The CMAP datasets yields quite similar results, which are presented in Appendix~\ref{sec:chicago_results}. 

\subsection{Alt-GNNs outperforming benchmark models}
\label{sec:results_prediction}
\cref{tab:best_model_comparison,tab:top5_models} summarize the predictive performance of 259 model configurations across six model families, including MNL, NL, ASU-DNN, Nested Alt-GNN, Complete Alt-GNN, and Attention Alt-GNN. \cref{tab:best_model_comparison} presents the model of the highest performance within each of the six model groups, thus enabling us to compare across six model families and identifying the causes underlying the higher performance of the Alt-GNNs. \cref{tab:top5_models} presents the five models achieving the highest performance in the testing set along with their corresponding hyperparameters. The two tables yield three major findings. 

\begin{table}[htbp]
\small
\centering
\caption{Highest-performing models within each of the six model families (LPMC dataset)}
\resizebox{1.0\linewidth}{!}{%
\begin{tabular}{p{3.5cm}P{2.2cm}P{2.2cm}P{2cm}P{2cm}P{2cm}P{2cm}}
\hline
\textbf{Model family} & Attention Alt-GNN & Complete Alt-GNN & Nested Alt-GNN & ASU-DNN & MNL & NL \\
\hline
\multicolumn{7}{l}{\textbf{Hyperparameter Configuration}} \\
\hline
Graph or Nest Structure & attention alternative graph & complete alternative graph & nested alternative graph 3 & nested alternative graph 0 & nested alternative graph 0 & nested alternative graph 3 \\
Message Function & mlp & mlp & mlp & -- & -- & linear \\
Aggregation Function & max & max & mean & -- & -- & logsum \\
Update Function & concat & concat & concat & -- & -- & plus \\
Readout Function & linear & linear & linear & mlp & linear & identity \\
Message Passing Layers & 1 & 2 & 2 & 0 & 0 & 1 \\
Hidden Units of M or R & 128(M) & 128(M)  & 128(M)  & 128(R)  & - & - \\
\hline
\multicolumn{7}{l}{\textbf{Performance Metrics}} \\
\hline
LL (Testing) & \textbf{-1372.69} & -1373.88 & -1383.26 & -1413.28 & -1469.11 & -1468.90 \\
F1 (Testing) & 0.538 & \textbf{0.544} & 0.529 & 0.529 & 0.512 & 0.513 \\
Accuracy (Testing) & \textbf{0.733} & 0.729 & 0.722 & 0.726 & 0.699 & 0.701 \\
\hline
\end{tabular}}
\label{tab:best_model_comparison}
\end{table}

First, Alt-GNNs consistently outperform all benchmark models as shown in both \cref{tab:best_model_comparison,tab:top5_models}. In \cref{tab:best_model_comparison}, all Alt-GNNs, including Attention Alt-GNN, Complete Alt-GNN, and Nested Alt-GNN, consistently outperform the ASU-DNN, MNL, and NL models, as measured by log-likelihood, F1 score, and accuracy on the testing set. For example, the highest-performing Attention Alt-GNN achieves a testing log-likelihood at –1372.69 and an accuracy of 0.733, as opposed to -1469.11 and 0.699 in MNL, and –1468.90 and 0.701 in NL. This result corresponds to a 6.6--6.7\% improvement of log-likelihood and 4.5--5.5\% improvement of prediction accuracy over MNL and NL. Within the Alt-GNNs, the Attention and Complete Alt-GNNs achieve higher performance than Nested Alt-GNNs, with a small improvement on log-likelihood, F1 score, and prediction accuracy. Comparing across the ASU-DNN, MNL, and NL models, the results are highly consistent with the past findings. The ASU-DNN significantly outperforms the MNL model, indicating the importance of a large hyperparameter space. The NL model slightly outperforms the MNL, indicating the importance of leveraging alternative dependence. In \cref{tab:top5_models}, we find that the top five models across all the 259 models are either Attention Alt-GNN or Complete Alt-GNN. Two of the top five models belong to the Complete Alt-GNNs, while three belong to Attention Alt-GNNs, indicating that the two Alt-GNN families achieve comparable performance overall. Across the top five models, their log-likelihood, F1 score, and accuracy in the testing set are quite similar, indicating that our empirical findings are stable and consistent. 

\begin{table}[htbp]
\small
\centering
\caption{Top five models (LPMC dataset)}
\resizebox{1.0\linewidth}{!}{%
\begin{tabular}{p{3.5cm}P{2.5cm}P{2.5cm}P{2.5cm}P{2.5cm}P{2.5cm}}
\hline
\textbf{Model Rank} & 1st & 2nd & 3rd & 4th & 5th \\
\hline
\multicolumn{6}{l}{\textbf{Hyperparameter Configuration}} \\
\hline
Model Name & Attention Alt-GNN & Complete Alt-GNN & Complete Alt-GNN & Attention Alt-GNN & Attention Alt-GNN \\
Graph Structure & attention alternative graph & complete alternative graph & complete alternative graph & attention alternative graph & attention alternative graph \\
Message Function & mlp & mlp & mlp & mlp & mlp \\
Aggregation Function & max & max & max & mean & mean \\
Update Function & concat & concat & concat & concat & concat \\
Readout Function & linear & linear & mlp & mlp & mlp \\
Message Passing Layers & 1 & 2 & 1 & 2 & 2 \\
Hidden Units of M& 128 & 128 & 32 & 32 & 64 \\
\hline
\multicolumn{6}{l}{\textbf{Performance Metrics}} \\
\hline
LL (Testing) & -1372.69 & -1373.88 & -1376.31 & -1376.56 & -1376.63 \\
F1 (Testing) & 0.538 & 0.544 & 0.537 & 0.543 & 0.536 \\
Accuracy (Testing) & 0.733 & 0.729 & 0.733 & 0.739 & 0.732 \\
\hline
\end{tabular}}
\label{tab:top5_models}
\end{table}

Second, the high predictive performance in Alt-GNNs is caused by the flexibility in alternative graph design. For example, as shown in \cref{tab:top5_models}, the top five models are Attention Alt-GNNs or Complete Alt-GNNs, rather than Nested Alt-GNN and other benchmark models. The high performance of Complete Alt-GNN can be explained by its fully connected alternative graph, which allows information to flow between all alternatives. Similarly, Attention Alt-GNNs allow each individual to have an individual-specific and asymmetric alternative graph, further relaxing the graph constraints. Both structures are more flexible than the nested alternative graphs, where the alternative graph is fixed by a predefined nesting structure shared by all individuals. Besides, \cref{tab:best_model_comparison} demonstrates that Nested Alt-GNN outperforms ASU-DNN, while NL outperforms MNL, also highlighting the contribution of capturing alternative dependence to model performance. In both Nested Alt-GNN and NL, message passing across alternatives allows the model to capture dependence among alternatives, while the ASU-DNN and MNL treat alternatives independently. Therefore, these results demonstrate the advantages of our Alt-GNN models. By incorporating an alternative graph, Alt-GNN captures alternative dependence through flexible alternative graphs, enriching the utility representation beyond what NL and ASU-DNN can provide. 


Third, the high performance of the Alt-GNN models is caused by the hyperparameters deviating from those standard ones in classical DCMs. As shown in \cref{tab:best_model_comparison}, while NL is a unique case of Nested Alt-GNN, the hyperparameters of the highest-performing Nested Alt-GNN are quite different from the NL model. Specifically, the optimum hyperparameters include MLP message function, Mean aggregation function, concatenation updating function, and linear readout function - every function is different from the predefined hyperparameters in NL models. For example, Nested Alt-GNN uses an MLP with 128 hidden units, where messages remain multi-dimensional during message passing and are reduced to a scalar only at the readout stage. In contrast, the message function in NL is a linear transformation that directly produces a scalar. A similar pattern can be observed when comparing ASU-DNN and MNL. ASU-DNN models the utility function using an MLP with 128 hidden units, allowing nonlinear interactions among input features, whereas MNL specifies the utility as a linear function of the attributes. Overall, the rich hyperparameter space in Alt-GNNs provide more flexible functional forms than the standard specifications in classical DCMs, which contribute to their performance differences.

\subsection{Elasticities and substitution patterns}
\label{sec:results_interpretation_substitution}
This subsection addresses RQ3 by examining how alternative graph design in Alt-GNNs enables flexible yet behaviorally constrained elasticity and substitution patterns. Instead of treating the six model families as disparate groups, we present a sequence of models from MNL and ASU-DNN, to NL and Nested Alt-GNN, and finally Complete and Attention Alt-GNNs, revealing the process of imposing and gradually relaxing behavioral constraints. The elasticity results are summarized in \cref{tab:elasticity_mnl_dnn,tab:elasticity_nl_nested_altgnn,tab:elasticity_attention_complete_altgnn}. Each row corresponds to a alternative-specific variable (e.g., automobile time, transit cost), and each column represents a particular travel mode. By holding all other variables at their average values, we calculate the mean and standard deviation of elasticities across the testing set. The elasticities can be revealed by visualizing the substitution patterns, as shown in \cref{fig:auto_cost_mnl_dnn,fig:auto_cost_nl_gnn,fig:auto_cost_compare_two_Alt-GNN}. In these figures, solid lines represent the choice probabilities varying with automobile costs, and dashed lines visualize the choice probability ratios between pairs of travel modes. 

First of all, ASU-DNN resembles the elasticities and substitution patterns of MNL. As shown in \cref{tab:elasticity_mnl_dnn}, the elasticities of automobile, bike, and walking with respect to transit time are consistently 0.48 (std = 0.63) in the MNL model and 0.52 (std = 0.89) in the ASU-DNN model. Similar patterns appear across other variables. For example, increases in automobile cost lead to equal elasticities of 0.06 for transit, bike, and walking in MNL, and 0.03 in ASU-DNN. This pattern arises because both models lack mechanisms for inter-mode interaction. Changes in a mode-specific variable (e.g., transit time) affect all other modes in a uniform and symmetric manner, reflecting the IIA property in both MNL and ASU-DNN. Similar to the elasticity values, ASU-DNN resembles MNL's substitution pattern as shown by the flat lines of choice probability ratios. In \cref{fig:auto_cost_mnl_dnn}, as the cost of automobile increases, the predicted probability of choosing an automobile (the blue solid line) decreases, while the probabilities of choosing transit, bike, and walking (the yellow, green, and red solid lines, respectively) all increase. Importantly, the relative proportions among the three non-automobile modes remain unchanged: the choice probability ratios between transit, bike, and walking stay constant across the full range of automobile costs, as visually shown by the flat blue, yellow, and green dashed lines. This invariance property holds for both MNL and ASU-DNN models, reflecting the shared IIA constraint. Although these findings mainly replicate the results from past studies \citep{wangDeepNeuralNetworks2020a}, here we present both models as unique cases of the unified Alt-GNN model family, rather than contrasting them between DCM and DNN paradigms. 

\begin{table}[htbp]
\centering
\caption{Elasticities of MNL vs. ASU-DNN (LPMC dataset)}
\resizebox{\textwidth}{!}{%
\begin{tabular}{lcccc|cccc}
\toprule
 & \multicolumn{4}{c|}{\textbf{MNL}} & \multicolumn{4}{c}{\textbf{ASU-DNN}} \\
& Automobile & Transit & Bike & Walking & Automobile & Transit & Bike & Walking \\
\midrule
Automobile time
& -0.97 (1.37) & \textbf{0.63 (0.71)} & \textbf{0.63 (0.71)} & \textbf{0.63 (0.71)}
& -0.81 (0.95) & \textbf{0.73 (0.80)} & \textbf{0.73 (0.80)} & \textbf{0.73 (0.80)} \\

Automobile cost
& -0.20 (0.46) & \textbf{0.06 (0.11)} & \textbf{0.06 (0.11)} & \textbf{0.06 (0.11)}
& -0.17 (0.56) & \textbf{0.03 (0.12)} & \textbf{0.03 (0.12)} & \textbf{0.03 (0.12)} \\

Transit time
& \textbf{0.48 (0.63)} & -0.65 (0.58) & \textbf{0.48 (0.63)} & \textbf{0.48 (0.63)}
& \textbf{0.52 (0.89)} & -0.70 (1.13) & \textbf{0.52 (0.89)} & \textbf{0.52 (0.89)} \\

Transit cost
& \textbf{0.15 (0.25)} & -0.21 (0.23) & \textbf{0.15 (0.25)} & \textbf{0.15 (0.25)}
& \textbf{0.03 (0.21)} & -0.12 (0.28) & \textbf{0.03 (0.21)} & \textbf{0.03 (0.21)} \\

Bike time
& \textbf{0.36 (0.40)} & \textbf{0.36 (0.40)} & -8.02 (8.55) & \textbf{0.36 (0.40)}
& \textbf{0.34 (0.41)} & \textbf{0.34 (0.41)} & -4.82 (3.74) & \textbf{0.34 (0.41)} \\

Walk time
& \textbf{0.04 (0.06)} & \textbf{0.04 (0.06)} & \textbf{0.04 (0.06)} & -1.80 (1.85)
& \textbf{0.05 (0.07)} & \textbf{0.05 (0.07)} & 0.05 (0.07) & -1.93 (2.11) \\

\bottomrule
\end{tabular}
}
\label{tab:elasticity_mnl_dnn}
\end{table}

\begin{figure}[H]
\centering
\begin{subfigure}[t]{0.48\textwidth}
    \centering
    \includegraphics[width=\textwidth]{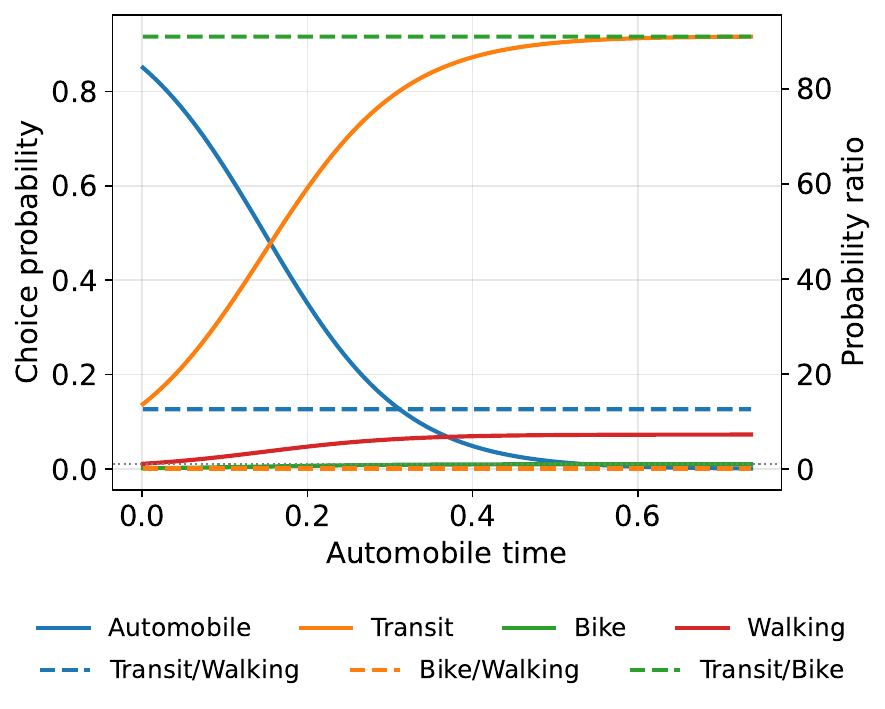}
    \caption{MNL}
    \label{fig:auto_cost_mnl}
\end{subfigure}
\hfill
\begin{subfigure}[t]{0.48\textwidth}
    \centering
    \includegraphics[width=\textwidth]{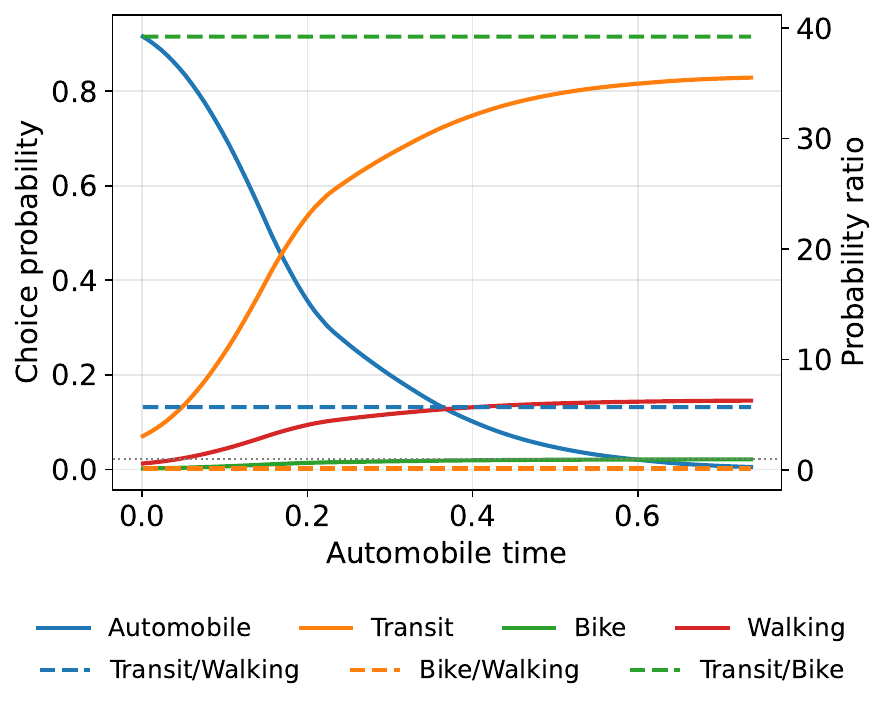}
    \caption{ASU-DNN}
    \label{fig:auto_cost_dnn}
\end{subfigure}
\caption{Substitution patterns in MNL and ASU-DNN}
\label{fig:auto_cost_mnl_dnn}
\end{figure}

Nested Alt-GNNs resemble the elasticities, substitution patterns, and the optimal nesting structures in the NL models, demonstrating a successful process of relaxing the IIA constraints in ASU-DNN while imposing behavior knowledge as constraints. As shown in Table~\ref{tab:elasticity_nl_nested_altgnn}, in the NL model, the elasticity of bike and walking (both in the non-motorized component) with respect to automobile time is 0.47 (std = 0.52), whereas transit, which belongs to the same component as automobile, has a much higher elasticity of 0.67 (std = 0.73). In the Nested Alt-GNN model, a similar elasticity pattern is observed: bike and walking have equal elasticities of 0.61 (std = 0.79) with respect to automobile time, while transit shows a higher elasticity of 0.92 (std = 0.86). Therefore in both NL and Nested Alt-GNN, the alternatives within the same component, i.e., nest in NL or subnetwork in Nested Alt-GNN, tend to have identical elasticities with respect to an alternative-specific variable in another component, while alternatives' elasticities in the same component are different. Visually, the Nested Alt-GNN resembles NL in its unique two-layer substitution pattern: when the mode-specific variable in one component (e.g., automobile cost) changes, the probability ratio among modes within the other component (e.g., bike/walking) remains constant. As reflected in \cref{fig:auto_cost_nl_gnn}, the dashed lines representing cross-nest probability ratios (e.g., transit/walking and transit/bike, shown as the blue and green dashed lines) change significantly, while the yellow dashed line—representing the within-nest ratio between bike and walking—remains relatively stable. This substitution behavior aligns with the elasticity results in ~\cref{tab:elasticity_nl_nested_altgnn}, where transit exhibits a different response compared to bike and walking when automobile-specific variables change. The findings demonstrate the power of the Alt-GNN framework because researchers can impose specific substitution patterns and behavioral constraints through the design of the alternative graph, as illustrated in Section \ref{sec:methodology}. 

Besides resembling the NL model's patterns, the Nested Alt-GNN, even as the simplest form of Alt-GNNs, generalizes the NL model because of its large hyperparameter space including NL as a unique case in Nested Alt-GNNs. While this finding is demonstrated in Section \ref{sec:results_prediction} through the predictive performance, here we can observe that the Nested Alt-GNN can learn a more flexible substitution pattern. As shown in Figure \ref{fig:auto_cost_nl_gnn}, the dashed probability ratio curves exhibit subtle bends in the transit–to-bike and transit–to-walking ratios as automobile time increases. These mild inflection points indicate localized changes in substitution patterns, suggesting that the Nested Alt-GNN can potentially capture certain threshold effects in behavioral responses that are difficult to represent in the smooth proportional substitution structure of traditional NL models.


\begin{table}[htbp]
\centering
\caption{Elasticities of NL vs. Nested Alt-GNN (LPMC dataset)}
\resizebox{\textwidth}{!}{%
\begin{tabular}{lcccc|cccc}
\toprule
 & \multicolumn{4}{c|}{\textbf{NL}} & \multicolumn{4}{c}{\textbf{Nested Alt-GNN}} \\
& Automobile & Transit & Bike & Walking & Automobile & Transit & Bike & Walking \\
\midrule
Automobile time
& -0.98 (1.42) & 0.67 (0.73) & \textbf{0.47 (0.52)} & \textbf{0.47 (0.52)}
& -0.84 (0.91) & 0.92 (0.86) & \textbf{0.61 (0.79)} & \textbf{0.61 (0.79)} \\

Automobile cost
& -0.19 (0.46) & 0.07 (0.11) & \textbf{0.05 (0.08)} & \textbf{0.05 (0.08)}
& -0.17 (0.52) & 0.04 (0.10) & \textbf{0.08 (0.25)} & \textbf{0.08 (0.25)} \\

Transit time
& 0.52 (0.66) & -0.67 (0.62) & \textbf{0.36 (0.47)} & \textbf{0.36 (0.47)}
& 0.65 (0.82) & -0.90 (1.00) & \textbf{0.45 (0.85)} & \textbf{0.45 (0.85)} \\

Transit cost
& 0.15 (0.25) & -0.20 (0.23) & \textbf{0.10 (0.18)} & \textbf{0.10 (0.18)}
& -0.07 (0.19) & -0.15 (0.28) & \textbf{0.23 (0.28)} & \textbf{0.23 (0.28)} \\

Bike time
& \textbf{0.34 (0.37)} & \textbf{0.34 (0.37)} & -7.59 (8.09) & 0.34 (0.37)
& \textbf{0.11 (0.14)} & \textbf{0.11 (0.14)} & -1.73 (1.47) & -0.02 (0.29) \\

Walk time
& \textbf{0.04 (0.04)} & \textbf{0.04 (0.04)} & 0.04 (0.04) & -1.48 (1.52)
& \textbf{0.32 (0.37)} & \textbf{0.32 (0.37)} & -2.79 (1.98) & -1.61 (1.69) \\

\bottomrule
\end{tabular}
}
\label{tab:elasticity_nl_nested_altgnn}
\end{table}

\begin{figure}[H]
\centering
\begin{subfigure}[t]{0.48\textwidth}
    \centering
    \includegraphics[width=\textwidth]{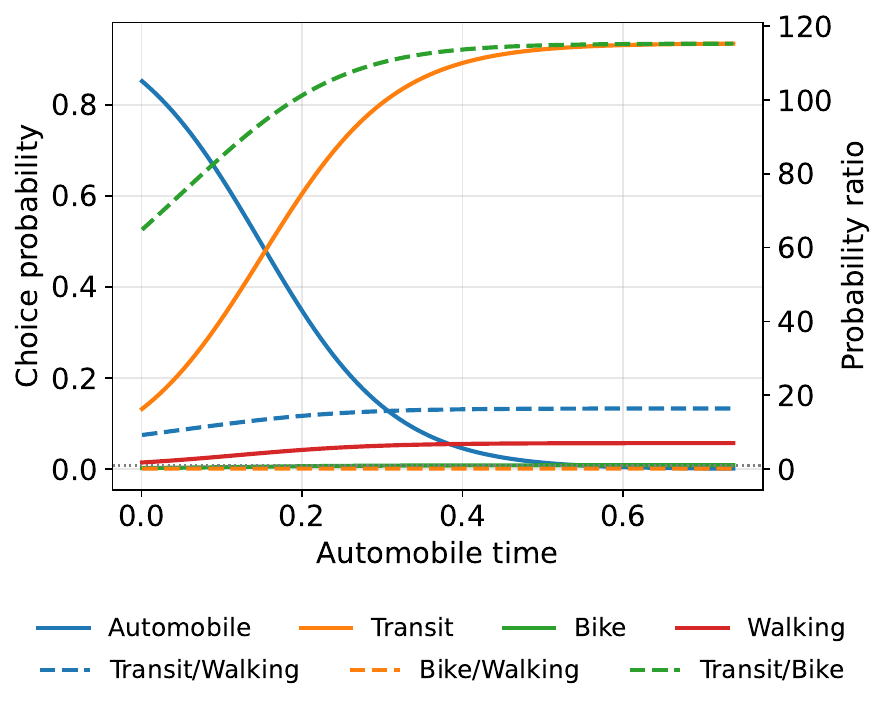}
    \caption{NL}
    \label{fig:auto_cost_nl}
\end{subfigure}
\hfill
\begin{subfigure}[t]{0.48\textwidth}
    \centering
    \includegraphics[width=\textwidth]{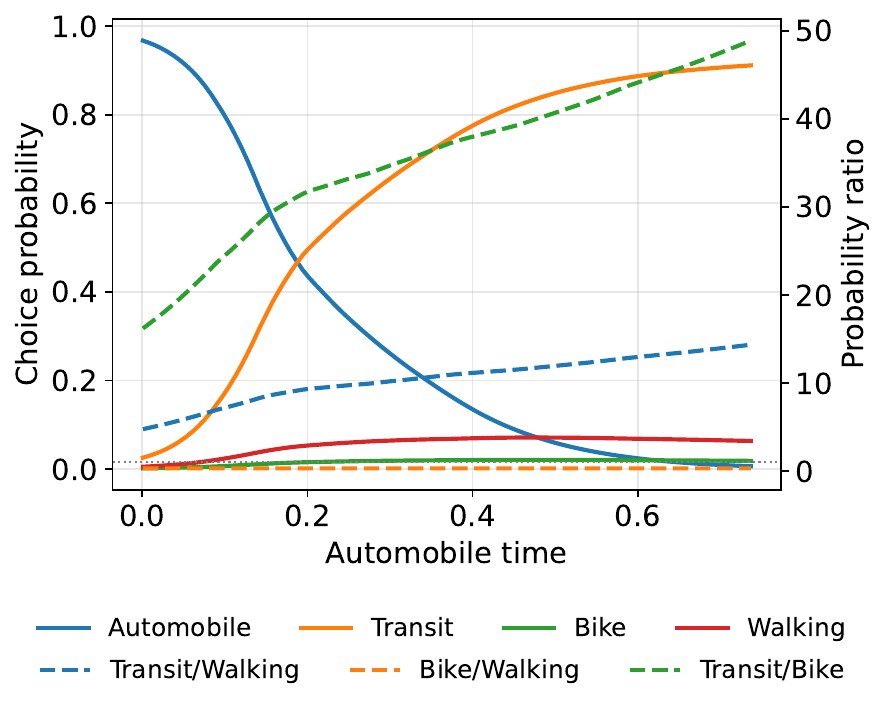}
    \caption{Nested Alt-GNN}
    \label{fig:auto_cost_Nested Alt-GNN}
\end{subfigure}
\caption{Substitution patterns in NL and Nested Alt-GNN}
\label{fig:auto_cost_nl_gnn}
\end{figure}

Complete and Attention Alt-GNNs further relax the behavioral constraints by reaching the most flexible forms of Alt-GNNs. Overall, both models reveal reasonable behavior patterns. In Table \ref{tab:elasticity_attention_complete_altgnn}, both Complete Alt-GNN and Attention Alt-GNN models present negative self-elasticities for travel time and overall positive cross-elasticities, indicating substitution effects among travel modes. Similarly, in Figure \ref{fig:auto_cost_compare_two_Alt-GNN}, both Attention and Complete Alt-GNNs illustrate reasonable substitution patterns when automobile travel time increases. Specifically, the probability of choosing an automobile decreases sharply, while the probability of choosing transit increases significantly, indicating a strong substitution effect from automobile to transit. In contrast, the probabilities of bike and walking change only slightly, suggesting that increases in automobile travel time mainly shift demand toward transit rather than active modes. The elasticities and substitution patterns of both Complete and Attention Alt-GNNs somehow resemble those from the NL and Nested Alt-GNN (Figure \ref{fig:auto_cost_nl_gnn}), reinforcing the behavioral validity of the highly flexible Alt-GNN models. 

Compared with the Attention Alt-GNN, the Complete Alt-GNN exhibits several behavioral inconsistencies in the elasticity results. In particular, some cross-elasticities have unexpected signs. For example, the elasticity of bike with respect to automobile travel time is negative, the elasticity of transit with respect to automobile cost is negative, and the elasticity of walking with respect to bike travel time is also negative. These signs contradict the typical substitution pattern expected in discrete choice models, where an increase in the cost or travel time of one alternative would generally increase the probability of choosing other alternatives. Similar patterns can be observed in Figure \ref{fig:auto_cost_compare_two_Alt-GNN}. When automobile travel time increases, the probabilities of walking and biking are not monotonically increasing, and the substitution ratios become extremely large in some regions. These results suggest that the fully connected structure of the Complete Alt-GNN may introduce strong interactions among alternatives, which can lead to less stable substitution patterns.

\begin{table}[htbp]
\centering
\caption{Elasticities of Attention and Complete Alt-GNNs (LPMC dataset)}
\resizebox{\textwidth}{!}{%
\begin{tabular}{lcccc|cccc}
\toprule
 & \multicolumn{4}{c|}{\textbf{Attention Alt-GNN}} & \multicolumn{4}{c}{\textbf{Complete Alt-GNN}} \\
& Automobile & Transit & Bike & Walking & Automobile & Transit & Bike & Walking \\
\midrule
Automobile time
& \textbf{-0.94 (1.16)} & 0.79 (0.83) & 0.80 (0.83) & 0.78 (0.84)
& \textbf{-0.93 (1.12)} & 1.06 (0.97) & -0.10 (0.82) & 0.79 (0.80) \\

Automobile cost
& \textbf{-0.16 (0.51)} & 0.03 (0.18) & 0.04 (0.15) & 0.02 (0.17)
& \textbf{-0.19 (0.68)} & -0.01 (0.14) & 0.13 (0.33) & 0.05 (0.27) \\

Transit time
& 0.53 (0.77) & \textbf{-0.96 (0.88)} & 0.65 (0.79) & 0.59 (0.80)
& 0.73 (0.81) & \textbf{-1.00 (1.02)} & 0.63 (1.22) & 0.49 (0.88) \\

Transit cost
& -0.04 (0.26) & \textbf{-0.19 (0.43)} & 0.06 (0.27) & 0.01 (0.25)
& -0.05 (0.20) & \textbf{-0.13 (0.27) }& 0.35 (0.39) & 0.07 (0.24) \\

Bike time
& 0.09 (0.12) & 0.11 (0.16) & \textbf{-1.38 (1.08)} & 0.07 (0.12)
& 0.11 (0.26) & 0.23 (0.25) & \textbf{-1.13 (0.82)} & -0.12 (0.28) \\

Walk time
& 0.49 (0.71) & 0.59 (0.89) & -3.10 (2.13) & \textbf{-1.69 (1.76)}
& 0.32 (0.30) & 0.23 (0.47) & -2.61 (2.07) & \textbf{-1.48 (1.63)} \\

\bottomrule
\end{tabular}
}
\label{tab:elasticity_attention_complete_altgnn}
\end{table}

\begin{figure}[H]
\centering
\begin{subfigure}[t]{0.48\textwidth}
    \centering
    \includegraphics[width=\textwidth]{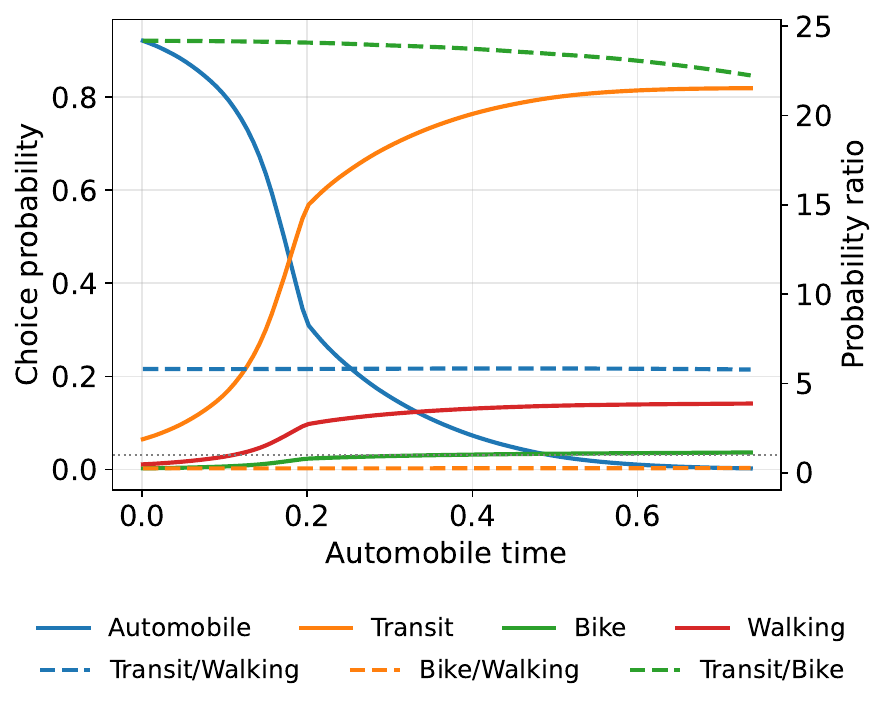}
    \caption{Attention Alt-GNN}
    \label{fig:auto_cost_AttentionGNN}
\end{subfigure}
\hfill
\begin{subfigure}[t]{0.48\textwidth}
    \centering
    \includegraphics[width=\textwidth]{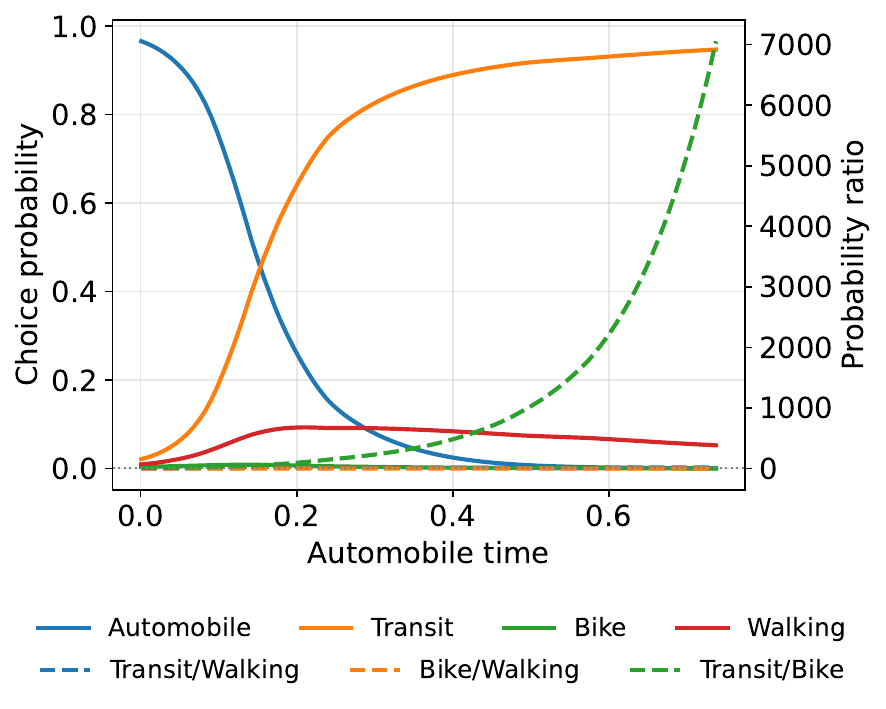}
    \caption{Complete Alt-GNN}
    \label{fig:auto_cost_ave_CompleteGNN}
\end{subfigure}
\caption{Substitution patterns of Attention and Complete Alt-GNNs}
\label{fig:auto_cost_compare_two_Alt-GNN}
\end{figure}

Overall, our empirical results demonstrate how to impose behavioral constraints to varying degrees using alternative graphs. The MNL and ASU-DNN are the most restrictive ones following the classical IIA constraint; the NL and Nested Alt-GNNs relax some constraints by following the two-layer substitution patterns in the classical NL model; the Complete and Attention Alt-GNNs are the most flexible forms with still reasonable elasticity patterns. Meanwhile, Alt-GNNs present significant flexibility in model specification due to the high-dimensional hyperparameter space. While we tested only three NL specifications, we can easily examine hundreds of Nested Alt-GNN models, all of which retain the core substitution properties of NL models. 

\subsection{Graph-based model interpretation}
\label{sec:results_interpretation_graph}
This subsection addresses RQ4 by examining how Alt-GNNs enable new forms of utility interpretation through alternative graphs that are not achievable in classical GEV models. In particular, the Complete and Attention Alt-GNNs provide graph-based interpretations characterized by four non-traditional properties: (1) automatic learning of alternative graphs rather than requiring modelers to pre-specify them, (2) directed structures that allow asymmetric utility dependencies across alternatives, (3) non-nested topologies that go beyond classical nesting structures, and (4) heterogeneous dependence patterns that vary across individuals. Here, we focus on the Attention Alt-GNNs to demonstrate these properties. Specifically, Figure~\ref{fig:attention_weight_histograms} presents the distribution of learned attention weights $\alpha_{nij}$ across directed mode pairs, illustrating how the model automatically differentiates the importance of different alternative interactions. Figure~\ref{fig:attention_individual} then presents individual-level attention graphs, demonstrating asymmetric utility dependencies and non-nested topologies across representative individuals. Finally, we apply hierarchical clustering to the individual attention-weight vectors, presenting the resulting dendrogram in Figure~\ref{fig:attention_dendrogram} and two representative cluster-average graphs in Figure~\ref{fig:attention_clusters}, which together reveal the structured heterogeneity of the learned alternative dependence across the population.

Property~(1) - automatic learning of the alternative graph - is demonstrated through the attention weights $\alpha_{nij}$, defined in Equation~\eqref{attention_weight} as a normalized function that connects node representations $h_i$ and $h_k$ for each individual $n$ and each ordered alternative pair $(i,j)$. Unlike the nested alternative graphs in Figure~\ref{fig:nesting_structures}, where modelers pre-specify which alternatives exchange messages and with what topology, the Attention Alt-GNN learns the entire alternative dependence structure from data. Figure~\ref{fig:attention_weight_histograms} shows the distribution of $\alpha_{nij}$ across all test-set individuals, with each subplot corresponding to the edge weights of a directed mode pair (e.g., walking to transit or transit to walking). The edge weights differ substantially across individuals: some pairs consistently receive high attention weights close to one while others remain concentrated near zero. This variation shows that the model automatically learn edge weights specific to individuals $n$ and alternative pairs $(i,j)$ without pre-imposed graph structures.

\begin{figure}[H]
\centering
\includegraphics[width=0.8\textwidth]{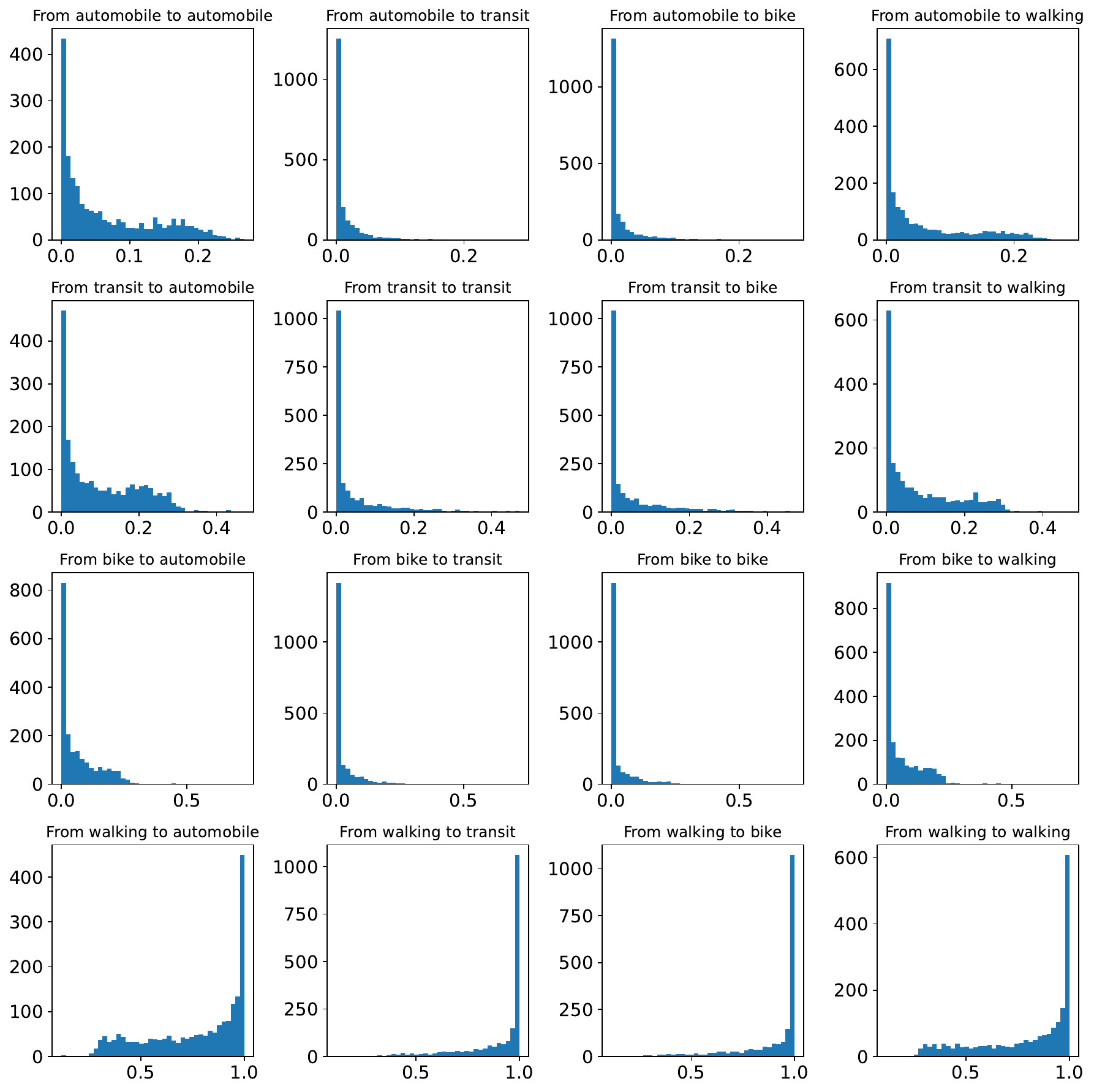}
\caption{Distribution of attention weights in the Attention Alt-GNNs}
\label{fig:attention_weight_histograms}
\end{figure}

Properties~(2) and~(3) - directed asymmetric utility dependencies and non-nested topologies - are demonstrated through Figure~\ref{fig:attention_individual}, which presents the learned alternative graphs for an average individual and eight representative individuals. The edge weights in each graph correspond to $\alpha_{nij}$, revealing that the influence between alternatives is frequently directional. For example, in Figure~\ref{Individual 150}, the attention weight from walking to transit ($\alpha_{n,\text{transit},\text{walking}} = 0.737$) far exceeds the reverse weight ($\alpha_{n,\text{walking},\text{transit}} = 0.168$), indicating that walking's representation substantially shapes the aggregated message entering the readout function for transit utility, while the converse does not hold. Such asymmetric dependencies are structurally excluded from the nested logit framework, where alternatives within the same nest exchange messages through symmetric and undirected edges of equal weight, imposing proportional substitution patterns within each nest. Moreover, neither the individual-level graphs nor the average graph in Figure~\ref{fig:attention_individual} conforms to any of the four nested alternative graphs in Figure~\ref{fig:nesting_structures}, confirming that the Attention Alt-GNN captures non-nested topologies that go beyond predefined nesting structures.

\begin{figure}[htbp]
\centering

\begin{subfigure}[t]{0.32\textwidth}
\centering
\includegraphics[width=\linewidth]{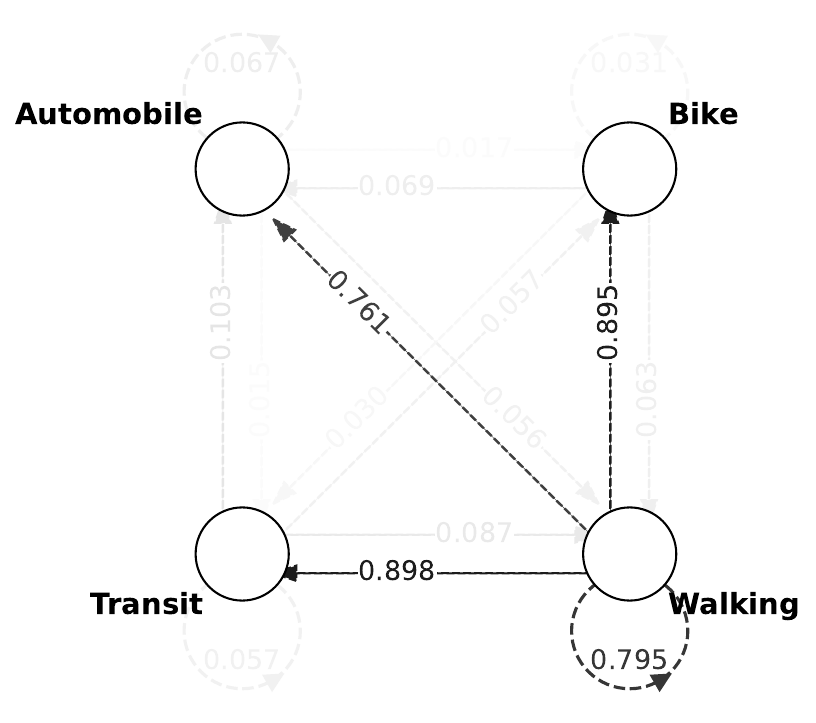}
\caption{Average}
\end{subfigure}\hfill
\begin{subfigure}[t]{0.32\textwidth}
\centering
\includegraphics[width=\linewidth]{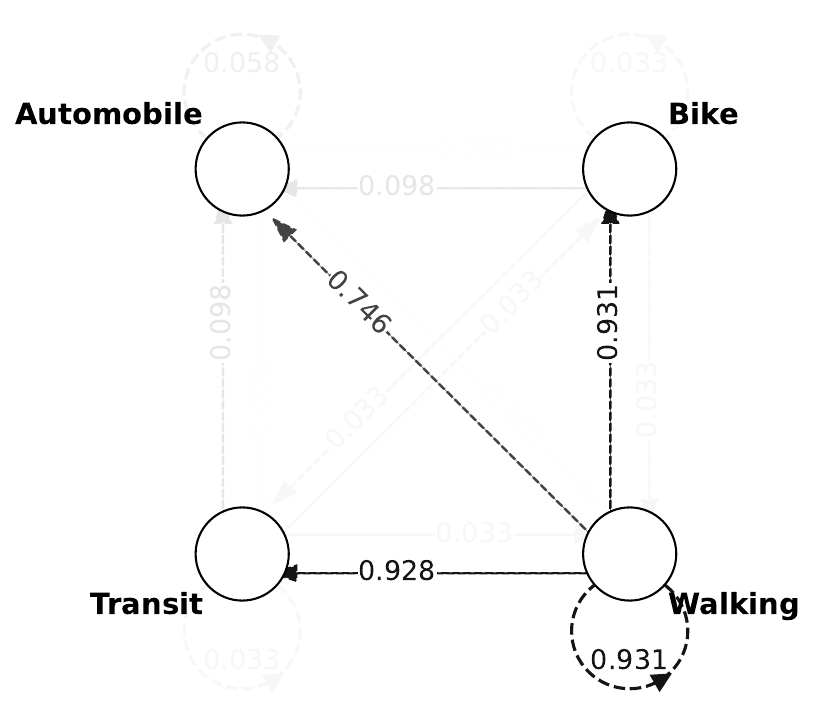}
\caption{Individual 33}
\end{subfigure}\hfill
\begin{subfigure}[t]{0.32\textwidth}
\centering
\includegraphics[width=\linewidth]{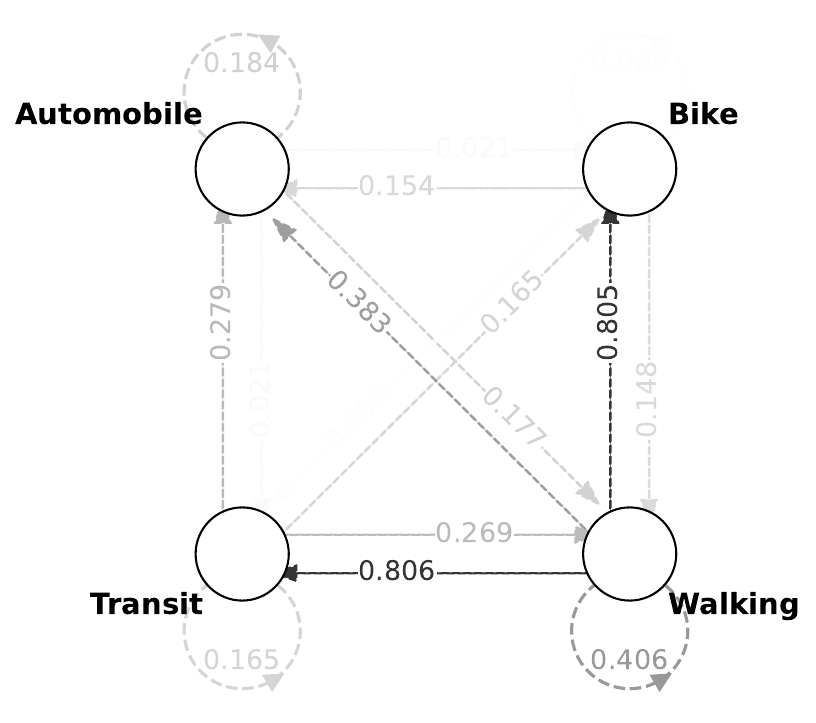}
\caption{Individual 81}
\end{subfigure}

\begin{subfigure}[t]{0.32\textwidth}
\centering
\includegraphics[width=\linewidth]{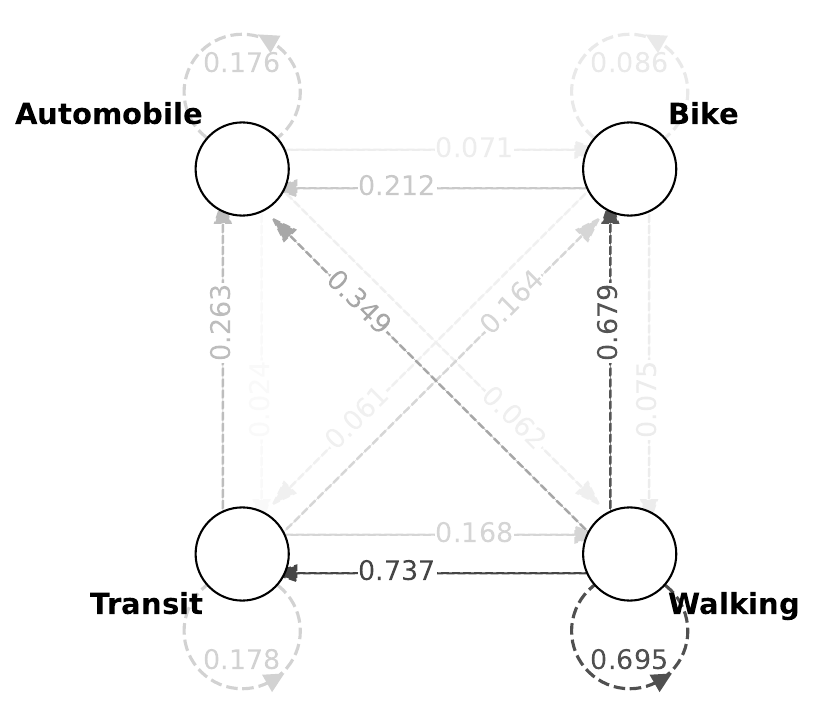}
\caption{Individual 150}
\label{Individual 150}
\end{subfigure}\hfill
\begin{subfigure}[t]{0.32\textwidth}
\centering
\includegraphics[width=\linewidth]{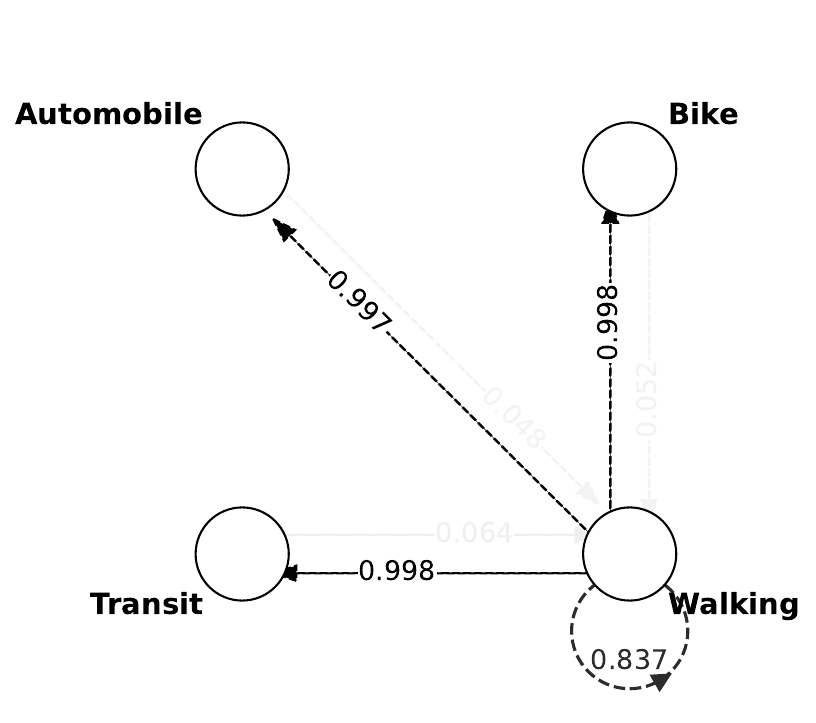}
\caption{Individual 538}
\end{subfigure}\hfill
\begin{subfigure}[t]{0.32\textwidth}
\centering
\includegraphics[width=\linewidth]{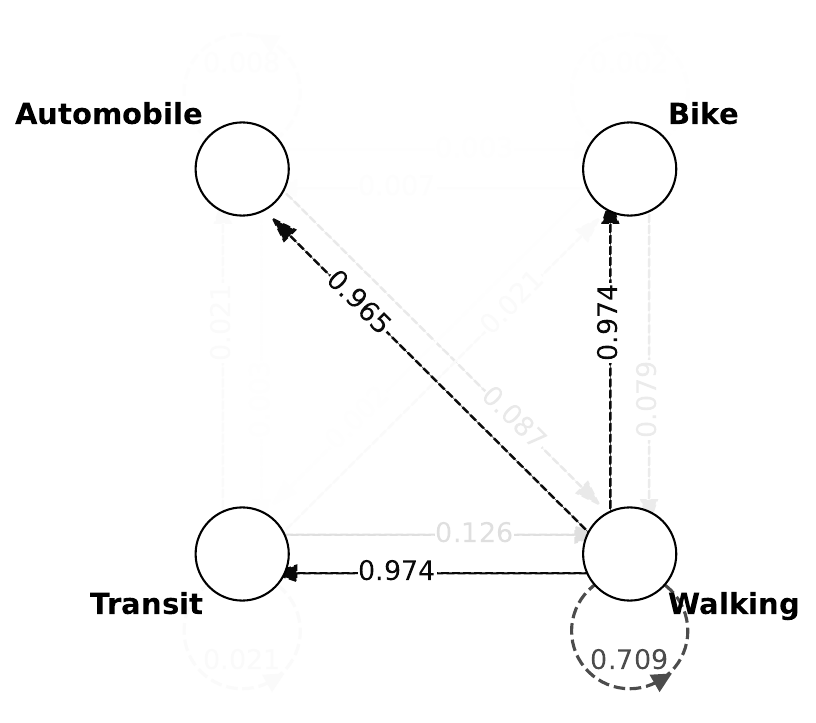}
\caption{Individual 614}
\end{subfigure}

\begin{subfigure}[t]{0.32\textwidth}
\centering
\includegraphics[width=\linewidth]{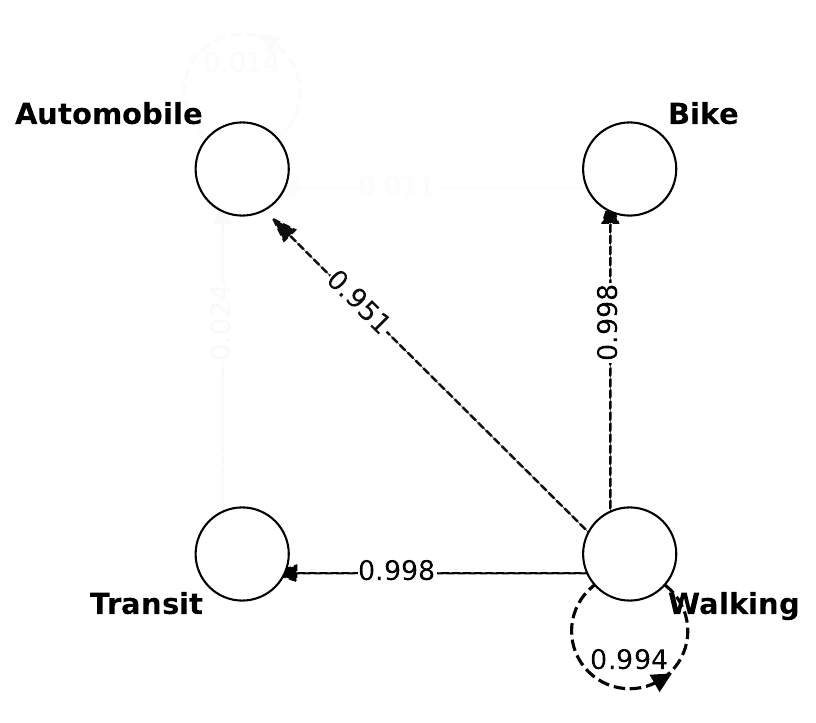}
\caption{Individual 1019}
\end{subfigure}\hfill
\begin{subfigure}[t]{0.32\textwidth}
\centering
\includegraphics[width=\linewidth]{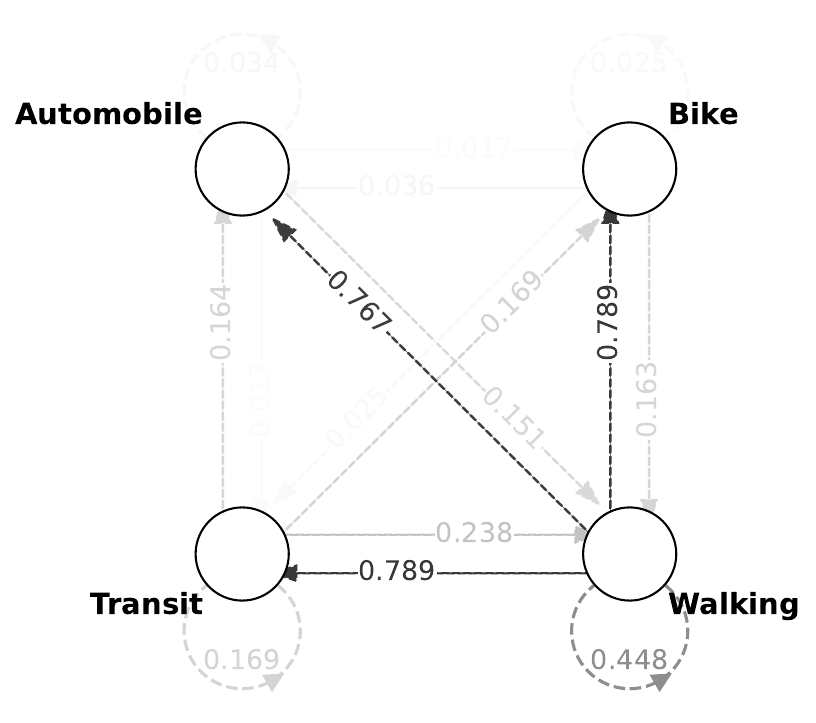}
\caption{Individual 1270}
\end{subfigure}\hfill
\begin{subfigure}[t]{0.32\textwidth}
\centering
\includegraphics[width=\linewidth]{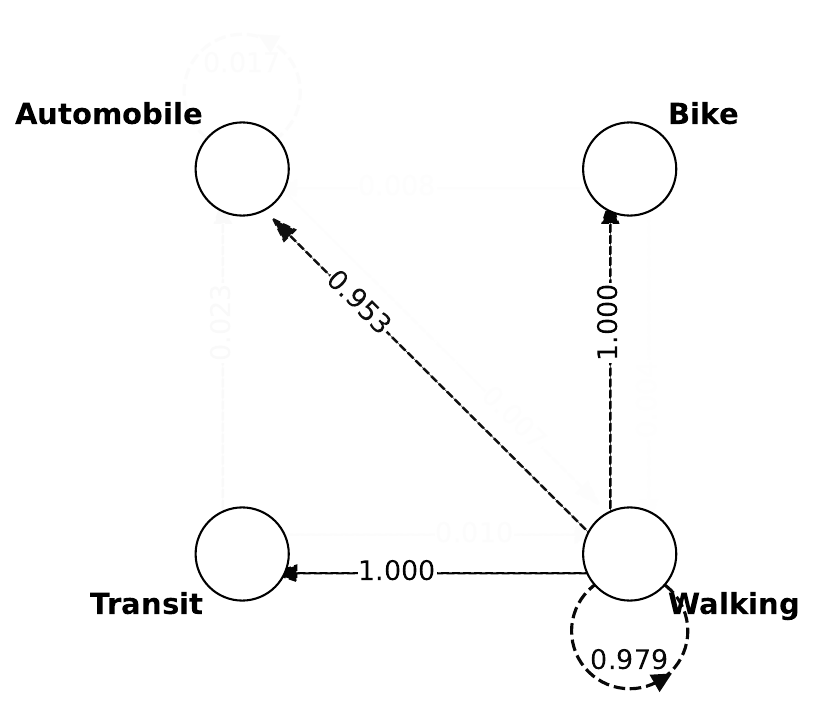}
\caption{Individual 1695}
\end{subfigure}
\caption{Attention graphs for the average individual and eight randomly sampled individuals.}
\label{fig:attention_individual}
\end{figure}

Property~(4) - heterogeneous dependence patterns across individuals - is demonstrated by the substantial variation in $\alpha_{nij}$ visible across the individual graphs in Figure~\ref{fig:attention_individual} and the marginal weight distributions in Figure~\ref{fig:attention_weight_histograms}, where different directed mode pairs show markedly different distributional shapes. To further characterize this heterogeneity, we apply hierarchical clustering with cosine distance to individual attention-weight vectors. Figure~\ref{fig:attention_dendrogram} presents the resulting dendrogram, which reveals a clear separation at a relatively high cosine distance, indicating two major clusters of interaction structures. These results confirm that the learned alternative dependence is not purely random across individuals, but exhibits heterogeneous yet structured patterns.


\begin{figure}[htbp]
    \centering
    \includegraphics[width=0.9\linewidth]{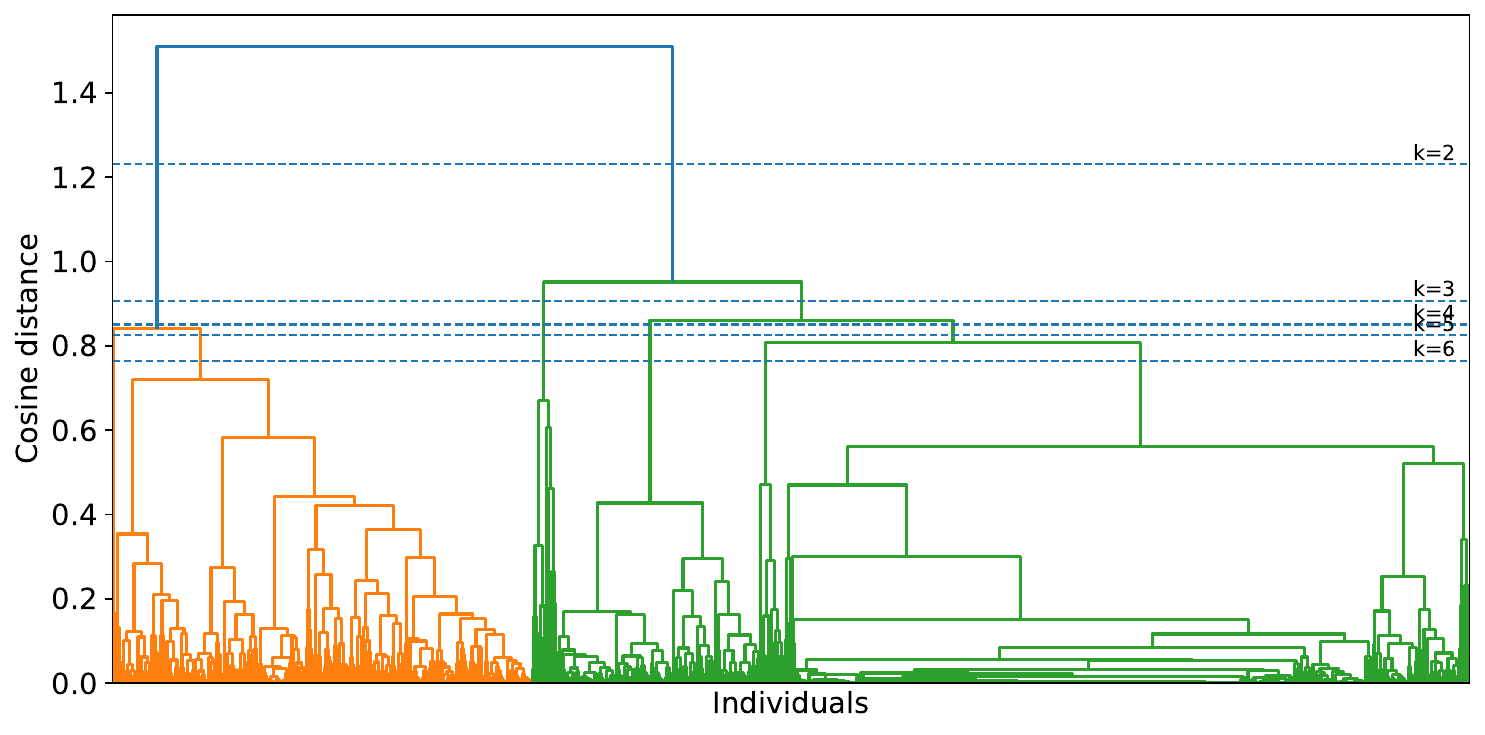}
    \caption{Hierarchical clustering dendrogram of individual attention weight patterns.}
    \label{fig:attention_dendrogram}
\end{figure}

\begin{figure}[H]
\centering
\begin{subfigure}[t]{0.45\textwidth}
    \centering
    \includegraphics[width=\textwidth]{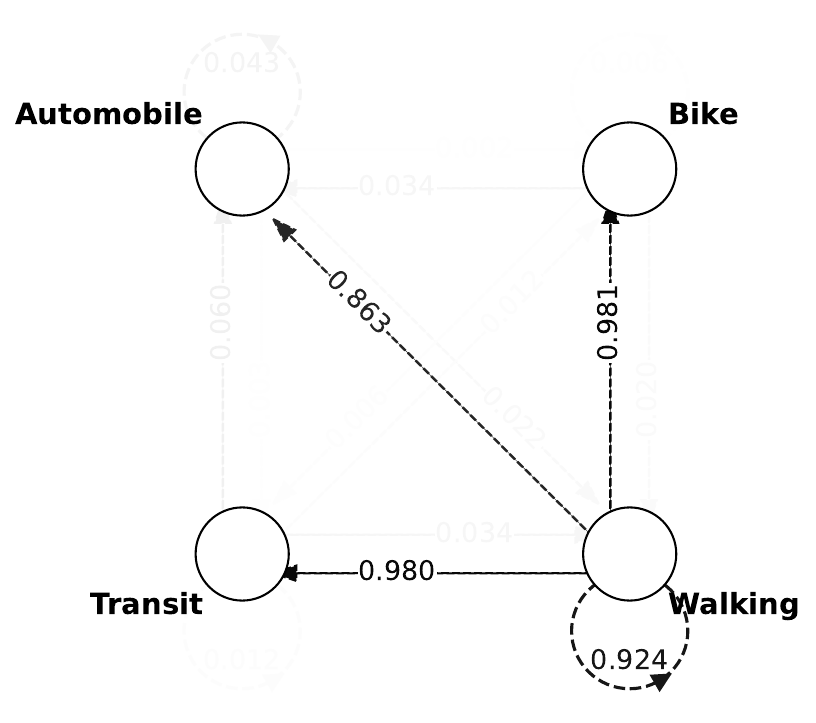}
    \caption{Cluster 1 (n=1381)}
    \label{fig:attention_cluster1}
\end{subfigure}
\hfill
\begin{subfigure}[t]{0.45\textwidth}
    \centering
    \includegraphics[width=\textwidth]{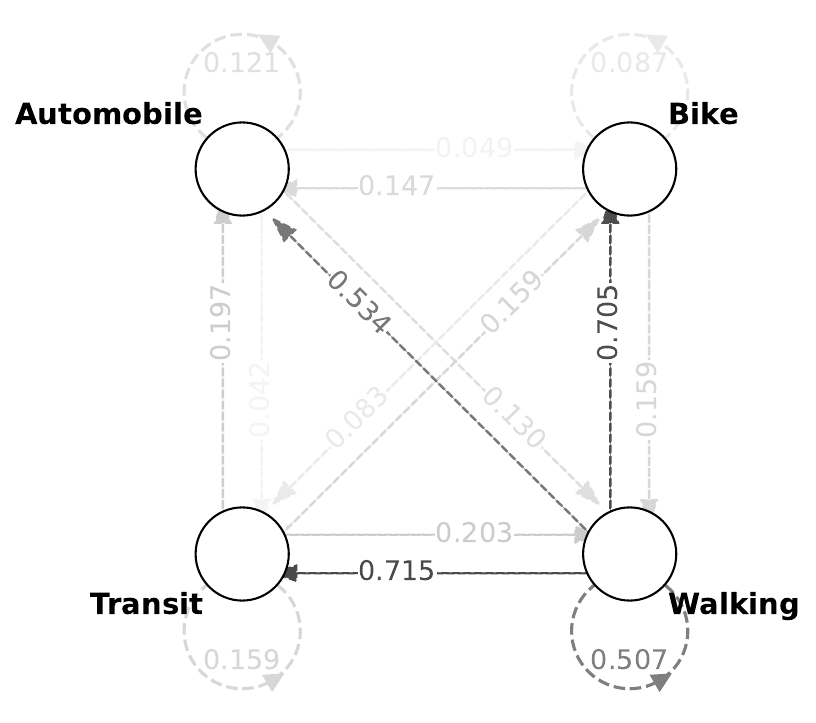}
    \caption{Cluster 2 (n=619)}
    \label{fig:attention_cluster2}
\end{subfigure}
\caption{Two representative attention-graph structures obtained by clustering on the test dataset.}
\label{fig:attention_clusters}
\end{figure}

The two clusters correspond to behaviorally distinct travel contexts and produce two interpretable alternative graphs. As shown in Table~\ref{tab:cluster_profile}, Cluster~1 ($n = 1{,}381$) consists predominantly of longer trips with a mean straight-line distance of 6,146\,meters and a mean walking time of 89.6\,min, where walking is rarely the observed choice (5.0\%). Despite this low observed walking share, the attention graph in Figure~\ref{fig:attention_cluster1} shows that the attention weights directed from walking to transit and from walking to bike each approach 0.98, and the weight from walking to automobile is approximately 0.68. Under Equation~\eqref{eq:attentiongnn_probability}, these large $\alpha_{nij}$ values indicate that the aggregated message $\log\sum_j \alpha_{nij}\exp(\mathrm{MLP}(x_{nj}^{(0)}))$ entering the readout function for transit and bike is almost entirely determined by the walking mode's representation $\mathrm{MLP}(x_{n,\text{walking}}^{(0)})$, implying walking as a dominant feeder mode towards all other three modes. In contrast, Cluster~2 ($n = 619$) covers shorter trips (mean distance 1,197\,meters, walking time 18.9\,min) where walking accounts for 46.5\% of observed choices. Its attention graph in Figure~\ref{fig:attention_cluster2} shows attention weights distributed more evenly across all mode pairs, so no single alternative dominates the aggregated message and the utility of each mode is informed jointly by all alternatives. Together, the four properties and two cluster profiles illustrate how the Attention Alt-GNN provides graph-based model interpretation, thus learning individual-specific, directed, and non-nested alternative graphs. Such properties are unavailable in classical GEV models, which are constrained to pre-specified and symmetric nesting structures.

\begin{table}[htbp]
\centering
\small
\begin{threeparttable}
\caption{Behavioral characteristics of the two clusters identified from attention patterns}
\label{tab:cluster_profile}

\begin{tabular}{lccccc}
\toprule
 & Age & Male (\%) & Vehicles & Household Size & Distance (m) \\
\midrule
Cluster 1 & 42.0 & 42.6 & 1.09 & 2.50 & 6146 \\
Cluster 2 & 35.0 & 52.7 & 0.71 & 2.36 & 1197 \\
\midrule
 & Automobile Time & Transit Time & Walking Time & Bike Time &  \\
\midrule
Cluster 1 & 21.6 & 34.8 & 89.6 & 28.6 &  \\
Cluster 2 & 6.5  & 12.4 & 18.9 & 6.5  &  \\
\midrule
 & Automobile (\%) & Transit (\%) & Walking (\%) & Bike (\%) &  \\
\midrule
Cluster 1 & 53.4 & 39.7 & 5.0 & 2.0 &  \\
Cluster 2 & 29.1 & 20.0 & 46.5 & 4.4 &  \\
\bottomrule
\end{tabular}

\begin{tablenotes}
\footnotesize
\item Note: Distance is the straight-line distance between the trip origin and destination, measured in meters.
\end{tablenotes}

\end{threeparttable}
\end{table}

\section{Conclusions}
\label{sec:conclusions}
Travel demand modeling has long relied on GEV models to represent structured dependence among alternatives, yet these models require alternative dependence to be predefined, symmetric, and shared uniformly across all individuals. The requirement for pre-specified nesting structures, combined with computational burdens in large-scale network-based choice problems, remains an enduring challenge for the DCM community. DNNs offer greater flexibility in utility specification but have not been designed to exert behavioral constraints on alternative dependence, treating each alternative as an independent input to the neural architecture. Existing synergistic studies design new feedforward DNN architectures, but have not exploited the opportunity to synergize GNNs and GEVs through the graph representation of alternative dependence. Therefore, we introduce the Alt-GNN framework that achieve this synergy along with multi-faceted goals. Corresponding to the four research questions, this study yields four major findings.

First, we make theoretical contributions by proposing Alternative Graph Neural Networks, a GNN framework that captures alternative dependence through an alternative graph and message passing algorithms. Alt-GNNs unify existing models by incorporating MNL, NL, and ASU-DNN as special cases and enable innovative model designs such as Nested Alt-GNN, Complete Alt-GNN, and Attention Alt-GNN. The Alt-GNN framework is consistent with RUM theory, parallels the Mother Logit formulation, and enables researchers to impose controls over behavioral patterns through the design of its hyperparameter space (e.g., alternative graph, aggregation functions). Second, empirically, Alt-GNNs significantly improve the predictive performance of benchmark models in terms of accuracy and log-likelihood. This improvement stems primarily from the flexible hyperparameter space of Alt-GNNs, particularly the alternative graph design transcending a pre-specified nesting structure. Third, the Nested Alt-GNN enables both flexible and systematically controlled elasticity and substitution patterns. It achieves strong predictive performance with interpretable patterns by retaining the two-layer substitution property of the NL model, as reflected in the elasticity tables and substitution pattern figures. The Alt-GNNs enable researchers to impose behavioral constraints to varying degrees depending on the design of the alternative graphs. Fourth, Alt-GNNs provide graph-based interpretations with four non-traditional properties: they automatically learn the alternative graph from data, allow directed and asymmetric dependencies across alternatives, move beyond nested structures, and capture heterogeneous dependence patterns across individuals. Overall, Alt-GNNs demonstrate how GNNs and GEV models can be synergized by converting pre-specified nesting structures to flexible alternative graphs, improving predictive performance, allowing behavioral priors to be encoded through alternative graph design, and enabling novel graph-based utility interpretations.

Despite these contributions, several major limitations warrant acknowledgment. While the Alt-GNN framework extends the representational capacity of classical GEV models, it does not seek to define new generating functions in the sense of McFadden's GEV theory \citep{mcfaddenModelingChoiceResidential1978, benAkivaFrancois1983}. Related to this gap, the current framework establishes explicit connections between Alt-GNNs and MNL, NL, and ASU-DNN, but does not yet provide a systematic procedure for mapping the broader DCM family. For example, the correspondence between nesting structures and nested alternative graphs is clear in Figure~\ref{fig:nesting_structures}, but it remains unclear how to design alternative graphs and Alt-GNN architectures that correspond to cross-nested logit, generalized nested logit, paired combinatorial logit, and other GEV models. The complete alternative graph, with appropriate modifications, likely reduces to the cross-nested logit, generalized nested logit, and paired combinatorial logit models, yet the exact correspondence remains to be established. Lastly, despite the control through alternative graphs, this Alt-GNN framework does not guarantee key economic properties such as monotonicity and well-defined willingness-to-pay. While the Nested Alt-GNN produces economically reasonable elasticity patterns, the Complete Alt-GNN and Attention Alt-GNN still generate several implausible values, including incorrect signs and non-monotonic responses. The behavior irregularity issues have been widely acknowledged and identified across many DNN-based choice models, and they could be addressed by further integrating the behavioral constraints into the training process of the Alt-GNNs \citep{fengDeepNeuralNetworks2024, kimNewFlexiblePartially2024}. 

Nonetheless, the Alt-GNN framework opens numerous research opportunities, as individual choices in urban transportation systems nearly always involve a network. While classical GEV models are limited to pre-defined nesting structures, the alternative graph concept can adapt to highly diverse spatial and alternative networks. For example, Alt-GNNs can be applied to residential and work location decisions using spatial alternative graphs, to route choice by treating routes as a chain graph, or to hierarchical spatial decisions by treating joint location-travel choices as multiscale alternative graphs. Due to efficient GNN computation, Alt-GNNs may yield further performance and scalability advantages when the alternative graph grows large, as in hierarchical or spatial decision contexts. Through its capability to learn individual-specific alternative graphs, the Alt-GNN framework enables researchers to examine individual-level decision mechanisms. Alt-GNNs can readily be integrated with existing synergistic approaches: for instance, enhancing behavioral regularity and interpretability by incorporating behavioral constraints. Introducing economic constraints and statistical inference tools can help ensure correct elasticity signs, behaviorally consistent substitution patterns, and reliable effect estimates in Alt-GNNs. Alt-GNNs can also be combined with causal inference approaches, such as Double Machine Learning, to identify causal effects on spatial or alternative graphs. Finally, future work can investigate the generating mechanisms underlying Alt-GNNs and examine whether a stronger theoretical foundation exists for the GNN-GEV synergy. Overall, this work opens broad opportunities to advance the methodological frontier, investigate diverse applications, and strengthen the theoretical foundation for the synergy of DNNs and DCMs.

\section*{Contributions of the authors}
Yuqi Zhou: Writing -- review \& editing, Writing -- original draft, Visualization, Validation, Methodology, Investigation, Formal analysis, Data curation. Zhanhong Cheng: Writing -- review \& editing, Writing -- original draft, Validation, Methodology, Investigation. Dingyi Zhuang: Writing -- review \& editing, Conceptualization. Lingqian Hu: Writing -- review \& editing, Supervision, Project administration. Yuheng Bu: Writing -- review \& editing. Shenhao Wang: Writing -- review \& editing, Writing -- original draft, Validation, Supervision, Project administration, Methodology, Investigation, Conceptualization.

\section*{Acknowledgment}
The authors acknowledges the support from the Research Opportunity Seed Fund 2023 at the University of Florida and the U.S. Department of Energy’s Office of Energy Efficiency and Renewable Energy (EERE) under the Vehicle Technology Program Award Number DE-EE0011186. The views expressed herein do not necessarily represent the views of the U.S. Department of Energy or the United States Government. The authors acknowledge the early discussions with Dr. Kara Kockelman, Dr. Joan Walker, and Dr. Jinhua Zhao in the research seminars at UT Austin, UC Berkeley, and MIT. The authors also acknowledge the support from Dr. Yao Rui and Siqi Feng at the early stage of this research. 

\bibliographystyle{elsarticle-harv}
\bibliography{Paper_GNN_Mode_Choice_Yuqi_260331}


\begin{appendices}
\section{Choice Probabilities in Nested Logit Models}
\label{sec:appendix_a}
\begin{proposition}\label{prop:nl_gnn}
The classical formula of the NL model given by \cref{eq:nest_probability_1} is equivalent to \cref{eq:nest_probability_2}--a special type of GNN.
\end{proposition}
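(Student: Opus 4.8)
The plan is to prove the identity by direct algebraic manipulation, transforming the numerator and denominator of \cref{eq:nest_probability_1} separately into those of \cref{eq:nest_probability_2}. The one ingredient that makes this work is the elementary identity $a = \exp(\log a)$ applied to the within-nest normalizer, which converts the multiplicative power form $(\cdot)^{\mu_k}$ into the additive log-sum-exponential form appearing inside the exponential in \cref{eq:nest_probability_2}. To keep the bookkeeping clean, I would first introduce the abbreviation $S_k := \sum_{j\in B_k} e^{V_{nj}/\mu_k}$ for the within-nest sum, and write $k(i)$ for the unique nest containing alternative $i$.

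First I would simplify the numerator. The product $P(i\md B_k)\,P(B_k\md\mathcal V)$ in \cref{eq:nest_probability_1} equals $\frac{e^{V_{ni}/\mu_k}}{S_k}\cdot \frac{S_k^{\mu_k}}{\sum_{l=1}^K S_l^{\mu_l}}$. The single factor $S_k$ in the conditional probability cancels against one power of $S_k^{\mu_k}$, leaving the numerator $e^{V_{ni}/\mu_k}\, S_k^{\mu_k-1}$. Rewriting $S_k^{\mu_k-1} = \exp\bigl((\mu_k-1)\log S_k\bigr)$ and merging the two exponentials gives exactly $\exp\bigl(V_{ni}/\mu_k + (\mu_k-1)\log S_k\bigr)$, which is the numerator of \cref{eq:nest_probability_2}.

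Next I would show the denominators agree. Starting from \cref{eq:nest_probability_2}, I would regroup its sum over all alternatives $m\in\mathcal V$ as a double sum over nests, $\sum_{l=1}^K \sum_{m\in B_l}$. Within a fixed nest $l$ the term $(\mu_l-1)\log S_l$ is constant in $m$, so its exponential factors out of the inner sum; the inner sum then collapses as $S_l^{\mu_l-1}\cdot\sum_{m\in B_l} e^{V_{nm}/\mu_l} = S_l^{\mu_l-1}\cdot S_l = S_l^{\mu_l}$, recovering $\sum_{l=1}^K S_l^{\mu_l}$, the denominator of \cref{eq:nest_probability_1}. Combining the two computations yields the claimed equality.

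The calculation itself is routine; the only genuine care needed is notational, and this is where I expect the main difficulty. In \cref{eq:nest_probability_2} the scale $\mu_l$ and the nest $B_l$ in the summand implicitly depend on the summation index $m$ (that is, $l=l(m)$ is the nest containing $m$), and likewise $\mu_k$, $B_k$ in the numerator depend on $i$. The key is therefore to make this index-dependence explicit before regrouping, so that factoring out the nest-constant term is justified — this is the step where a careless reading of the compressed notation could go wrong, and it is the only part of the argument that demands attention rather than mechanical simplification.
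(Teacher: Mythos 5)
Your proposal is correct and follows essentially the same route as the paper's proof: cancel one factor of $S_k$ to obtain the numerator $e^{V_{ni}/\mu_k}S_k^{\mu_k-1}$, convert powers to exponentials of logarithms, and relate the two denominators by exchanging a sum over nests with a double sum over alternatives (the paper does this forward by inserting a factor of $\tfrac{\sum_{m\in B_l}e^{V_{nm}/\mu_l}}{\sum_{m\in B_l}e^{V_{nm}/\mu_l}}$, you do it in reverse by factoring out the nest-constant term, which is the same identity). Your remark about making the dependence $l=l(m)$ explicit is a fair point of care that the paper's compressed notation glosses over.
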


\begin{proof}
The NL model in \cref{eq:nest_probability_1} can by reorganized as
\begin{align*}
P_{ni} &= P(i|B_k) P(B_k|\mathcal{V})\\
    &=\frac{\exp \left(V_{n i}/\mu_k\right)}{\sum_{j \in B_k} \exp \left(V_{n j}/\mu_k\right)} \times \frac{\left(\sum_{j \in B_k} \exp \left(V_{n j}/\mu_k\right)\right)^{\mu_k}}{\sum_{l=1}^{K_n}\left(\sum_{j \in B_l} \exp \left(V_{n j}/\mu_l\right)\right)^{\mu_l}}\\
    &= \frac{\exp \left(V_{n i}/\mu_k\right)\left(\sum_{j \in B_k} \exp \left(V_{n j}/\mu_k\right)\right)^{\mu_k -1}}
    {\sum_{l=1}^{K_n}
    \left(\sum_{j \in B_l} \exp \left(V_{n j}/\mu_l\right)\right)^{\mu_l}
    }\\
    &= \frac{\exp \left(V_{n i}/\mu_k\right)\left(\sum_{j \in B_k} \exp \left(V_{n j}/\mu_k\right)\right)^{\mu_k -1}}
    {\sum_{l=1}^{K_n}
    \left(
    \frac{\sum_{m \in B_l} \exp \left(V_{n m}/\mu_l\right)}{\sum_{m \in B_l} \exp \left(V_{n m}/\mu_l\right)}
    \left(\sum_{j \in B_l} \exp \left(V_{n j}/\mu_l\right)\right)^{\mu_l}
    \right)} \\
    &= \frac{\exp \left(V_{n i}/\mu_k\right)\left(\sum_{j \in B_k} \exp \left(V_{n j}/\mu_k\right)\right)^{\mu_k -1}}
    {\sum_{l=1}^{K_n} \sum_{m \in B_l}
    \left(
    \frac{ \exp \left(V_{n m} / \mu_l\right)}{\sum_{m \in B_l} \exp \left(V_{n m} / \mu_l\right)}
    \left(\sum_{j \in B_l} \exp \left(V_{n j}/\mu_l\right)\right)^{\mu_l}
    \right)} \\
    & = \frac{\exp \left(V_{n i}/\mu_k\right)\left(\sum_{j \in B_k} \exp \left(V_{n j}/\mu_k\right)\right)^{\mu_k -1}}
    {\sum_{l=1}^{K_n} \sum_{m \in B_l}
    \left(
    \exp \left(V_{n m} / \mu_l\right)
    \left(\sum_{j \in B_l} \exp \left(V_{n j}/\mu_l\right)\right)^{\mu_l - 1}
    \right)}\\
    & = \frac{\exp \left(V_{n i}/\mu_k + (\mu_k -1)\log\left( \sum_{j \in B_k} \exp \left(V_{n j}/\mu_k\right)\right)
    \right)}
    {\sum_{m \in \mathcal{V}}
    \exp \left( V_{n m} / \mu_l + (\mu_l - 1)
    \log (\textstyle\sum_{j \in B_l} \exp \left(V_{n j}/\mu_l \right) )\right)}.
\end{align*}

\end{proof}

This yields to the formula in \cref{eq:nest_probability_2}.

\section{Substitution patterns in nested logit model}
\label{sec:appendix_substitution_nl}
The NL model enhances the flexibility in the substitution pattern by relaxing the restrictive IIA constraint in the MNL \citep{mcfaddenModelingChoiceResidential1978}. Its choice probability function is specified as:
\begin{equation}
P_{ni} = P(i|B_k) P(B_k|\mathcal{V}) = \frac{e^{V_{ni}/\mu_k}}{\sum_{j\in B_k} e^{V_{nj}/\mu_k}} \times \frac{(\sum_{j\in B_k} e^{V_{nj}/\mu_k})^{\mu_k}}{\sum_{l = 1}^K(\sum_{j\in B_l} e^{V_nj/\mu_l})^{\mu_l}},
\label{eq:nest_probability_1}
\end{equation}
where $B_k$ and $B_l$ represent the alternative sets in nests $k$ and $l$, and $\mu_k$ and $\mu_l$ represent their corresponding scale factors \citep{mcfaddenConditionalLogitAnalysis1974}. Although  \cref{eq:nest_probability_1} is the most common NL form, its terms can be reorganized as \cref{eq:nest_probability_2}. The equivalence between \cref{eq:nest_probability_1} and \cref{eq:nest_probability_2} has been formally established in our ongoing work \citep{chengGraphNeuralNetworks2025} and is detailed in Appendix~\ref{sec:appendix_a}.
Here we would like to reinterpret NL by highlighting two interrelated properties as shown in \cref{eq:nest_probability_2}. 
\begin{equation}
P_{ni} = \frac{\exp \left( V_{ni}/\mu_k + (\mu_k - 1) \log ( \sum_{j \in B_k} \exp (V_{nj}/\mu_k) ) \right)}{\sum_{m \in \mathcal{V}} \exp \left( V_{nm}/\mu_l + (\mu_l - 1) \log (\sum_{j \in B_l} \exp (V_{nj}/\mu_l) ) \right)}.
\label{eq:nest_probability_2}
\end{equation}

The NL model is characterized by its additive nest utility with log-sum-exponential (LSE) aggregation. This is a straightforward description of the numerator in \cref{eq:nest_probability_2}, as its first term $V_{ni}/\mu_k$ represents the alternative $i$'s self utility and the second term $(\mu_k - 1) \log ( \sum_{j \in B_k} \exp (V_{nj}/\mu_k) )$ is added to the first term by aggregating all the utilities in nest $B_k$ using a log-sum-exponential (LSE) form with $\mu_k$ for linear scaling. This property will be leveraged to design our Nested Alt-GNN models in \cref{sec:methodology}. 



\smallskip 
\begin{property_nl}
\textbf{Two-Layer Substitution.} \textit{The NL model is characterized by its two-layer substitution patterns: proportional substitution of alternatives $i$ and $j$ within every nest $B_k$ when $i, j \in B_k$, and disproportional substitution of alternatives $i$ and $j$ across nests $B_k$ and $B_l$ when $i \in B_k$ and $j \in B_l$}. 
\end{property_nl}

\noindent Mathematically, the two-layer substitution property can be shown straightforwardly. When $i, j \in B_k$, the ratio of choice probabilities 
\begin{equation}
\frac{P_{ni}}{P_{nj}} = \frac{e^{V_{ni}/\mu_k}}{e^{V_{nj}/\mu_k}},
\label{eq:sub_1}
\end{equation}
which does not depend on any alternatives other than $i$ and $j$. When $i \in B_k$ and $j \in B_l$, the ratio of choice probabilities equals to: 
\begin{equation}
\frac{P_{ni}}{P_{nj}} = \frac{\exp \left( V_{ni}/\mu_k + (\mu_k -1) \log ( \sum_{m \in B_k} e^{V_{nm}/\mu_k} ) \right)} {\exp \left( V_{nj}/\mu_l + (\mu_l -1) \log ( \sum_{m \in B_l} e^{V_{nm}/\mu_l} ) \right) }.
\label{eq:sub_2}
\end{equation}

This two-layer substitution property characterizes the uniqueness of NL because it captures how the NL model retains the proportional substitution pattern of MNL within each nest and yet relaxes the IIA restriction across nests. \cref{eq:sub_2} relaxes the IIA constraint only partially because the choice probability ratio depends on all the alternatives within nests $B_k$ and $B_l$, but still is independent of the alternatives beyond the two nests. Comparing the two properties, Property 2 can be seen as a consequence of Property 1 because the additive LSE aggregation cancels out in \cref{eq:sub_1} and remains in \cref{eq:sub_2}. In the following section, we will demonstrate that the Nested Alt-GNN model can fully replicate this unique two-layer substitution pattern by integrating the concept of alternative graph and message passing, and yet it generalizes the NL model through a much richer model design space.

\section{Experiment results for CMAP Dataset}
\label{sec:chicago_results}
The empirical findings from the CMAP dataset are highly consistent with those reported in Section~\ref{sec:results} using the LPMC dataset. Alt-GNNs again outperform all benchmark DCM and DNN models in log-likelihood, accuracy, and F1 score, with the Attention and Complete Alt-GNNs achieving the highest predictive performance (Tables~\ref{tab:top5_models_chicago} and~\ref{tab:best_model_comparison_chicago}). The Nested Alt-GNN continues to produce elasticity and substitution patterns closely aligned with the NL model, confirming that the alternative graph design reliably encodes behavioral constraints across datasets. The MNL and ASU-DNN models follow the IIA constraint as expected, while the Complete and Attention Alt-GNNs exhibit more flexible substitution structures (Tables~\ref{tab:elasticity_asu_mnl_chicago}--\ref{tab:elasticity_attention_complete_altgnn_chicago}). The attention-weight distributions and clustering results similarly reveal heterogeneous yet structured dependence patterns across individuals, with walking again emerging as a dominant latent reference signal for motorized modes in the long-trip cluster (Figure~\ref{fig:attention_clusters_chicago}).

\begin{table}[htbp]
\small
\centering
\caption{Top-5 performing model configurations among all 259 evaluated models for CMAP dataset}
\resizebox{1.0\linewidth}{!}{%
\begin{tabular}{p{3.5cm}P{2.5cm}P{2.5cm}P{2.5cm}P{2.5cm}P{2.5cm}}
\hline
\textbf{Model Rank} & 1st & 2nd & 3rd & 4th & 5th \\
\hline
\multicolumn{6}{l}{\textbf{Hyperparameter Configuration}} \\
\hline
Model Name & Attention Alt-GNN & Attention Alt-GNN & Attention Alt-GNN & Complete Alt-GNN & Attention Alt-GNN \\
Graph Structure & attention alternative graph & attention alternative graph & attention alternative graph & complete alternative graph & attention alternative graph \\
Aggregation Function & mean & mean & max & max & max \\
Message Function & mlp & mlp & mlp & mlp & linear \\
Update Function & concat & concat & concat & concat & concat \\
Readout Function & linear & linear & mlp & linear & linear \\
Message Passing Layers & 1 & 2 & 2 & 2 & 1 \\
Hidden Units of M & 32 & 128 & 32 & 128 & 128 \\
\hline
\multicolumn{6}{l}{\textbf{Performance Metrics}} \\
\hline
Train LL & -5666.14 & \textbf{-5513.59} & -5594.73 & -5591.23 & -5694.54 \\
Train F1 & 0.307 & \textbf{0.317} & 0.306 & 0.312 & 0.300 \\
Train Accuracy & 0.741 & \textbf{0.744} & 0.742 & 0.742 & 0.741 \\
Test LL & \textbf{-1472.72} & -1473.67 & -1475.38 & -1475.78 & -1476.70 \\
Test F1 & 0.303 & \textbf{0.311} & 0.300 & 0.307 & 0.295 \\
Test Accuracy & \textbf{0.724} & \textbf{0.724} & 0.723 & \textbf{0.724} & 0.722 \\
\hline
\end{tabular}}
\label{tab:top5_models_chicago}
\end{table}

\begin{table}[htbp]
\small
\centering
\caption{Best-performing configuration within each of the six model families for CMAP dataset}
\resizebox{1.0\linewidth}{!}{%
\begin{tabular}{p{3.5cm}P{2.2cm}P{2.2cm}P{2cm}P{2cm}P{1.4cm}P{1.4cm}}
\hline
\textbf{Model family} & Attention Alt-GNN & Complete Alt-GNN & Nested Alt-GNN & ASU-DNN & MNL & NL \\
\hline
\multicolumn{7}{l}{\textbf{Hyperparameter Configuration}} \\
\hline
Graph or Nest Structure & attention alternative graph & complete alternative graph & nested alternative graph 3 & -- & -- & nested alternative graph 2 \\
Aggregation Function & mean & max & max & -- & -- & logsum \\
Message Function & mlp & mlp & mlp & -- & -- & linear \\
Update Function & concat & concat & concat & -- & -- & plus \\
Readout Function & linear & linear & linear & mlp & linear & identity \\
Message Passing Layers & 1 & 2 & 1 & 0 & 0 & 1 \\
Hidden Units of M or R & 32(M) & 128(M)  & 64(M)  & 128(R)  & - & - \\
\hline
\multicolumn{7}{l}{\textbf{Performance Metrics}} \\
\hline
Train LL & -5666.14 & \textbf{-5591.23} & -5707.15 & -5680.81 & -5894.93 & -5894.17 \\
Train F1 & 0.307 & \textbf{0.312} & 0.306 & 0.305 & 0.276 & 0.276 \\
Train Accuracy & 0.741 & \textbf{0.742} & 0.740 & 0.741 & 0.732 & 0.732 \\
Test LL & \textbf{-1472.72} & -1475.78 & -1483.19 & -1485.64 & -1515.26 & -1514.74 \\
Test F1 & 0.303 & \textbf{0.307} & 0.301 & 0.298 & 0.270 & 0.269 \\
Test Accuracy & \textbf{0.724} & \textbf{0.724} & 0.723 & 0.722 & 0.716 & 0.716 \\
\hline
\end{tabular}}
\label{tab:best_model_comparison_chicago}
\end{table}

\begin{table}[htbp]
\centering
\caption{Elasticities of ASU-DNN and MNL (CMAP dataset)}
\resizebox{\textwidth}{!}{%
\begin{tabular}{lcccc|cccc}
\toprule
 & \multicolumn{4}{c|}{\textbf{ASU-DNN}} 
 & \multicolumn{4}{c}{\textbf{MNL}} \\
& Auto & Transit & Bike & Walk 
& Auto & Transit & Bike & Walk \\
\midrule

Auto time
& -0.03 (0.04) & \textbf{0.05 (0.06)} & \textbf{0.05 (0.06)} & \textbf{0.05 (0.06)}
& -0.01 (0.02) & \textbf{0.04 (0.03)} & \textbf{0.04 (0.03)} & \textbf{0.04 (0.03)} \\

Auto cost
& 0.02 (0.04) & \textbf{-0.05 (0.08)} & \textbf{-0.05 (0.08)} & \textbf{-0.05 (0.08)}
& 0.02 (0.02) & \textbf{-0.06 (0.02)} & \textbf{-0.06 (0.02)} & \textbf{-0.06 (0.02)} \\

Transit time
& \textbf{0.01 (0.02)} & -0.08 (0.13) & \textbf{0.01 (0.02)} & \textbf{0.01 (0.02)}
& \textbf{0.01 (0.01)} & -0.05 (0.07) & \textbf{0.01 (0.01)} & \textbf{0.01 (0.01)} \\

Transit cost
& \textbf{0.04 (0.04)} & -0.23 (0.13) & \textbf{0.04 (0.04)} & \textbf{0.04 (0.04)}
& \textbf{0.05 (0.03)} & -0.22 (0.04) & \textbf{0.05 (0.03)} & \textbf{0.05 (0.03)} \\

Bike time
& \textbf{0.00 (0.02)} & \textbf{0.00 (0.02)} & -0.04 (0.25) & \textbf{0.00 (0.02)}
& \textbf{0.00 (0.00)} & \textbf{0.00 (0.00)} & -0.02 (0.04) & \textbf{0.00 (0.00)} \\

Walk time
& \textbf{-0.00 (0.00)} & \textbf{-0.00 (0.00)} & \textbf{-0.00 (0.00)} & 0.03 (0.08)
& \textbf{-0.00 (0.00)} & \textbf{-0.00 (0.00)} & \textbf{-0.00 (0.00)} & 0.00 (0.01) \\

\bottomrule
\end{tabular}
}
\label{tab:elasticity_asu_mnl_chicago}
\end{table}

\begin{figure}[H]
\centering
\begin{subfigure}[t]{0.48\textwidth}
    \centering
    \includegraphics[width=\textwidth]{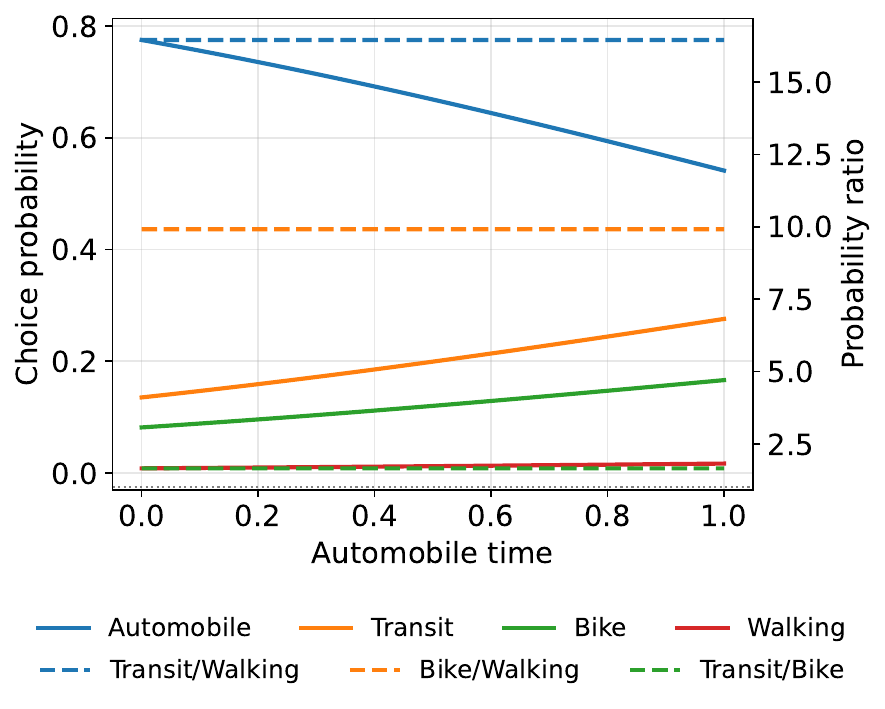}
    \caption{MNL}
\end{subfigure}
\hfill
\begin{subfigure}[t]{0.48\textwidth}
    \centering
    \includegraphics[width=\textwidth]{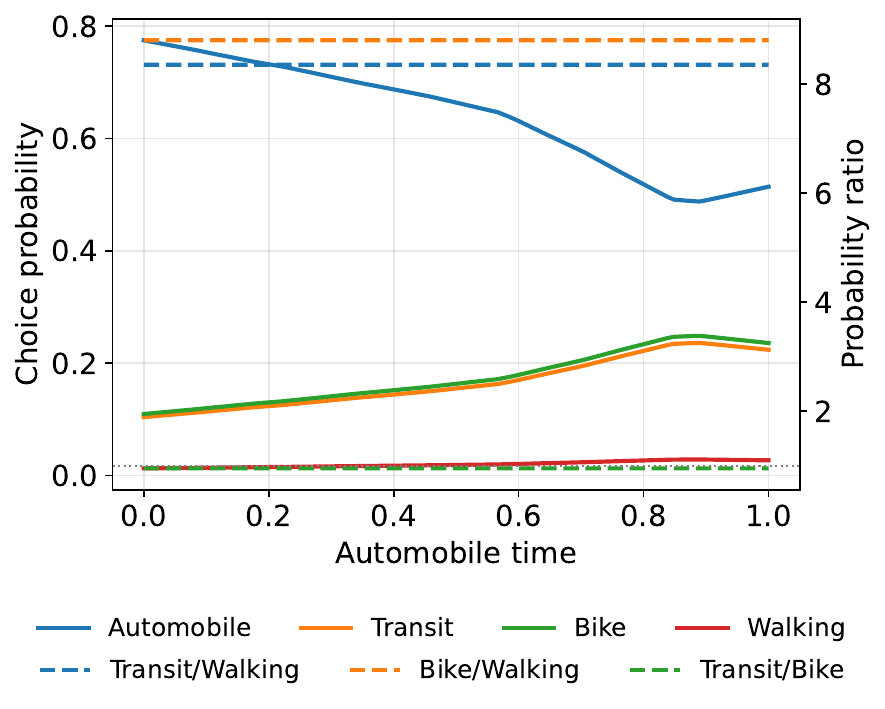}
    \caption{ASU-DNN}
\end{subfigure}
\caption{Substitution patterns in MNL and ASU-DNN for CMAP Data}
\end{figure}

\begin{table}[htbp]
\centering
\caption{Elasticities of Nested Alt-GNN and NL (CMAP dataset)}
\resizebox{\textwidth}{!}{%
\begin{tabular}{lcccc|cccc}
\toprule
 & \multicolumn{4}{c|}{\textbf{Nested Alt-GNN}} 
 & \multicolumn{4}{c}{\textbf{NL}} \\
& Auto & Transit & Bike & Walk 
& Auto & Transit & Bike & Walk \\
\midrule

Auto time
& -0.03 (0.04) & 0.06 (0.06) & \textbf{0.08 (0.07)} & \textbf{0.08 (0.07)}
& -0.02 (0.02) & 0.04 (0.04) & \textbf{0.04 (0.04)} & \textbf{0.04 (0.04)} \\

Auto cost
& 0.02 (0.04) & -0.11 (0.10) & \textbf{-0.05 (0.11)} & \textbf{-0.05 (0.11)}
& 0.03 (0.02) & -0.07 (0.03) & \textbf{-0.06 (0.02)} & \textbf{-0.06 (0.02)} \\

Transit time
& 0.02 (0.02) & -0.08 (0.11) & \textbf{-0.01 (0.05)} & \textbf{-0.01 (0.05)}
& 0.01 (0.01) & -0.05 (0.07) & \textbf{0.01 (0.01)} & \textbf{0.01 (0.01)} \\

Transit cost
& 0.03 (0.03) & -0.25 (0.23) & \textbf{0.00 (0.06)} & \textbf{0.00 (0.06)}
& 0.07 (0.05) & -0.31 (0.05) & \textbf{0.06 (0.04)} & \textbf{0.06 (0.04)} \\

Bike time
& \textbf{0.01 (0.02)} & \textbf{0.01 (0.02)} & -0.06 (0.22) & 0.00 (0.05)
& \textbf{0.00 (0.00)} & \textbf{0.00 (0.00)} & -0.02 (0.03) & \textbf{0.00 (0.00)} \\

Walk time
& \textbf{-0.00 (0.00)} & \textbf{-0.00 (0.00)} & 0.01 (0.03) & 0.07 (0.08)
& \textbf{-0.00 (0.00)} & \textbf{-0.00 (0.00)} & \textbf{-0.00 (0.00)} & 0.01 (0.01) \\

\bottomrule
\end{tabular}
}
\label{tab:elasticity_nested_nl_chicago}

\vspace{0.2cm}
\footnotesize{\textit{Note:} Nested Alt-GNN uses a nested alternative graph 3 structure, while NL corresponds to a nested alternative graph 2 structure.}
\end{table}

\begin{figure}[H]
\centering
\begin{subfigure}[t]{0.48\textwidth}
    \centering
    \includegraphics[width=\textwidth]{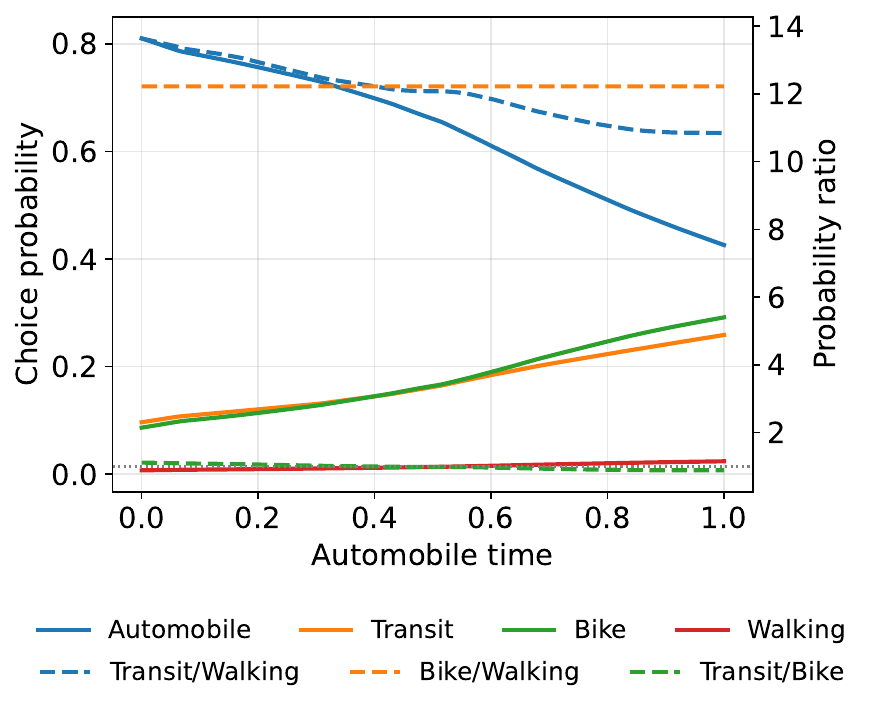}
    \caption{Nested Alt-GNN}
\end{subfigure}
\begin{subfigure}[t]{0.48\textwidth}
    \centering
    \includegraphics[width=\textwidth]{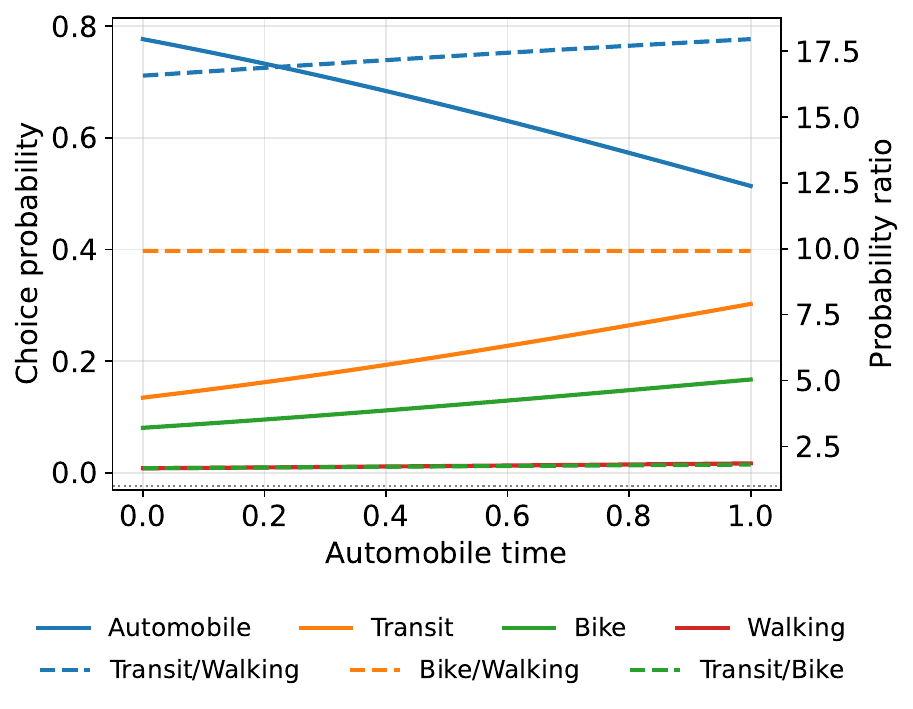}
    \caption{NL}
\end{subfigure}
\hfill

\caption{Substitution patterns in MNL and ASU-DNN for CMAP Data}
\end{figure}

\begin{table}[htbp]
\centering
\caption{Elasticities of Attention and Complete Alt-GNNs (CMAP dataset)}
\resizebox{\textwidth}{!}{%
\begin{tabular}{lcccc|cccc}
\toprule
 & \multicolumn{4}{c|}{\textbf{Attention Alt-GNN}} & \multicolumn{4}{c}{\textbf{Complete Alt-GNN}} \\
& Automobile & Transit & Bike & Walking & Automobile & Transit & Bike & Walking \\
\midrule

Automobile time
& \textbf{-0.04 (0.08)} & 0.05 (0.07) & 0.04 (0.06) & 0.04 (0.08)
& \textbf{-0.02 (0.03)} & 0.05 (0.06) & 0.07 (0.07) & 0.08 (0.07) \\

Automobile cost
& \textbf{0.05 (0.12)} & -0.05 (0.09) & -0.05 (0.10) & -0.08 (0.15)
& \textbf{0.01 (0.09)} & -0.07 (0.21) & -0.01 (0.19) & -0.23 (0.18) \\

Transit time
& 0.01 (0.02) & \textbf{-0.07 (0.13)} & 0.01 (0.02) & 0.02 (0.03)
& 0.01 (0.04) & \textbf{-0.07 (0.16)} & -0.04 (0.13) & 0.09 (0.12) \\

Transit cost
& 0.01 (0.02) & \textbf{-0.03 (0.05)} & 0.00 (0.02) & -0.01 (0.04)
& 0.08 (0.10) & \textbf{-0.17 (0.12)} & -0.09 (0.15) & -0.47 (0.21) \\

Bike time
& 0.00 (0.02) & 0.01 (0.03) & \textbf{-0.05 (0.28)} & 0.01 (0.05)
& -0.00 (0.04) & 0.02 (0.05) & \textbf{-0.01 (0.21)} & 0.00 (0.12) \\

Walk time
& 0.00 (0.03) & -0.01 (0.07) & -0.01 (0.07) & \textbf{0.01 (0.10)}
& 0.01 (0.03) & -0.02 (0.05) & -0.01 (0.11) & \textbf{0.00 (0.19)} \\

\bottomrule
\end{tabular}
}
\label{tab:elasticity_attention_complete_altgnn_chicago}
\end{table}
\begin{figure}[H]
\centering
\begin{subfigure}[t]{0.48\textwidth}
    \centering
    \includegraphics[width=\textwidth]{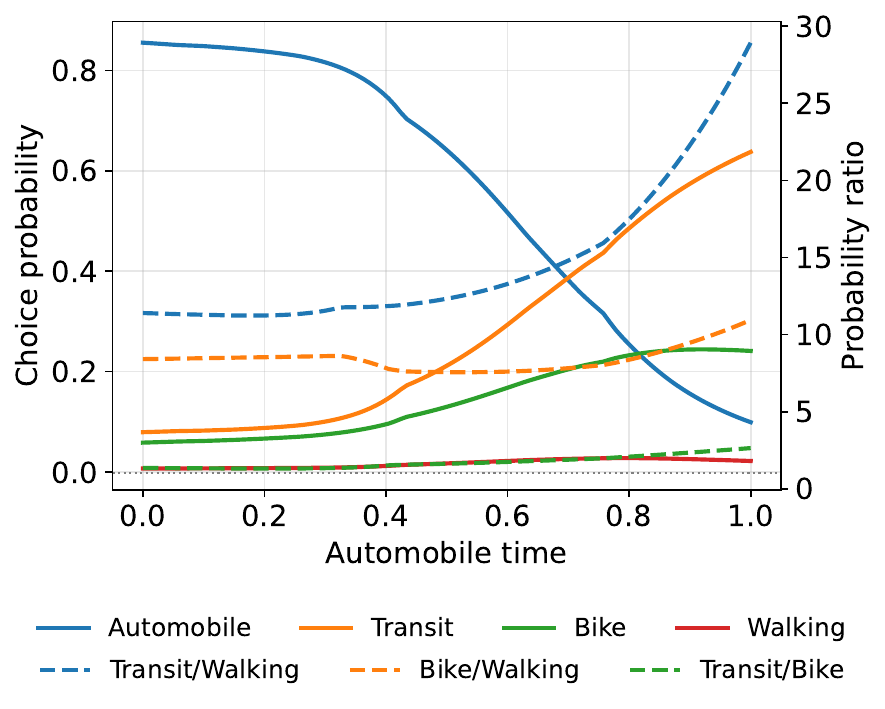}
    \caption{Attention Alt-GNN}
\end{subfigure}
\begin{subfigure}[t]{0.48\textwidth}
    \centering
    \includegraphics[width=\textwidth]{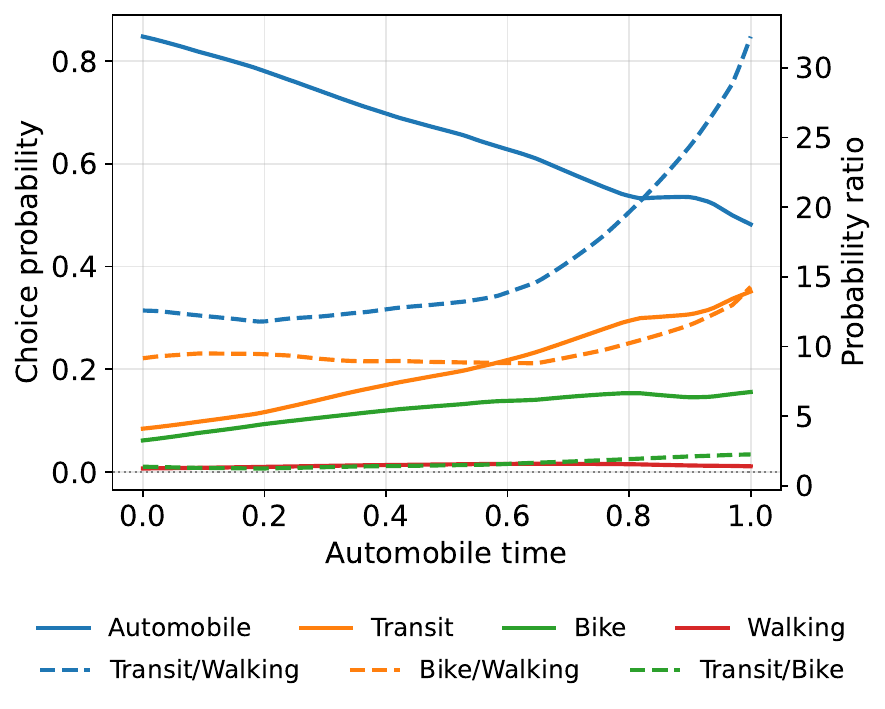}
    \caption{Complete Alt-GNN}
\end{subfigure}
\hfill

\caption{Substitution patterns in MNL and ASU-DNN for CMAP Data}
\end{figure}

\begin{figure}[H]
\centering
\includegraphics[width=0.8\textwidth]{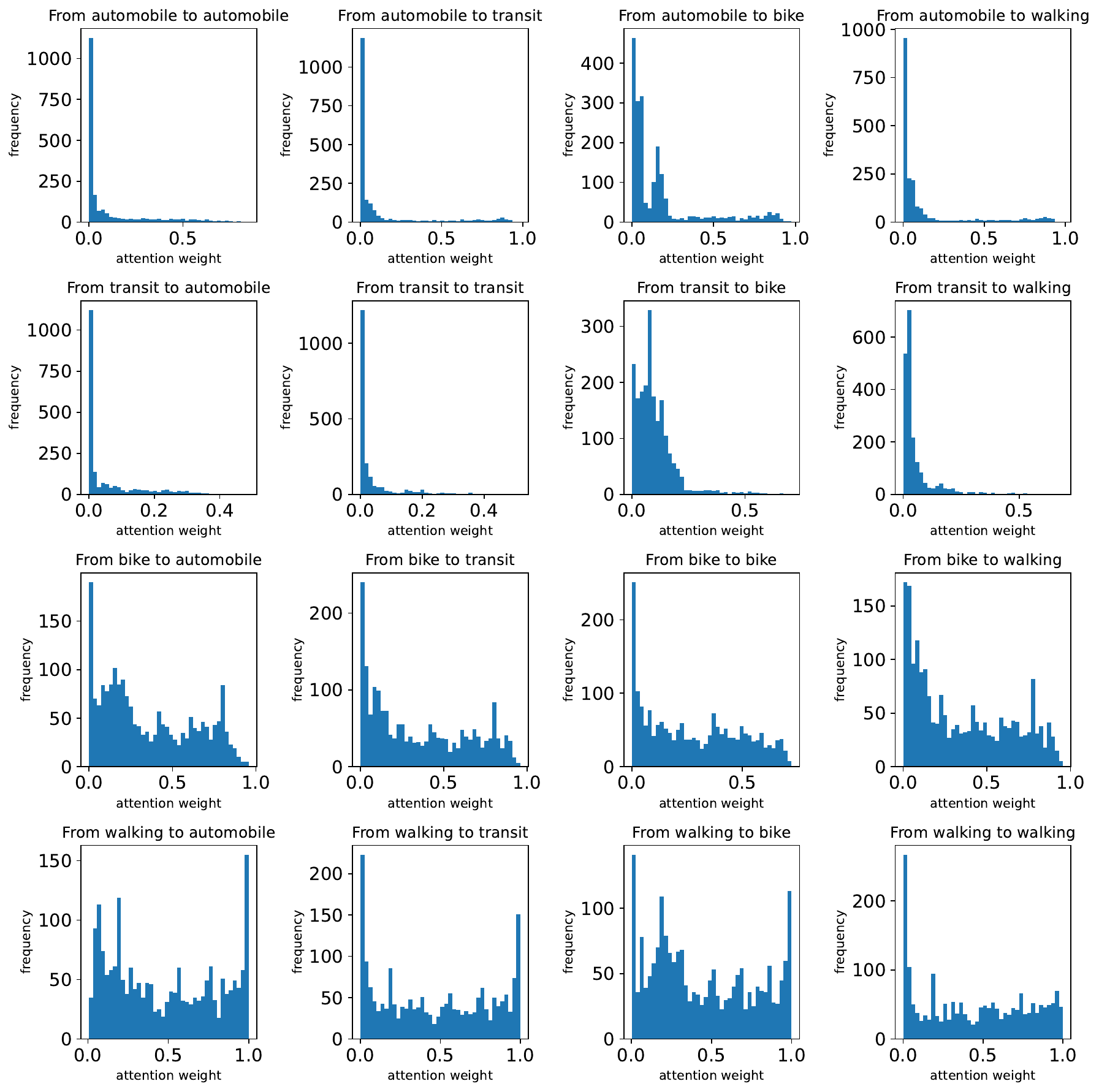}
\caption{Distribution of attention weights learned by the Attention-GNN on the test dataset for CMAP Data.}
\end{figure}

\begin{figure}[htbp]
    \centering
    \includegraphics[width=0.8\linewidth]{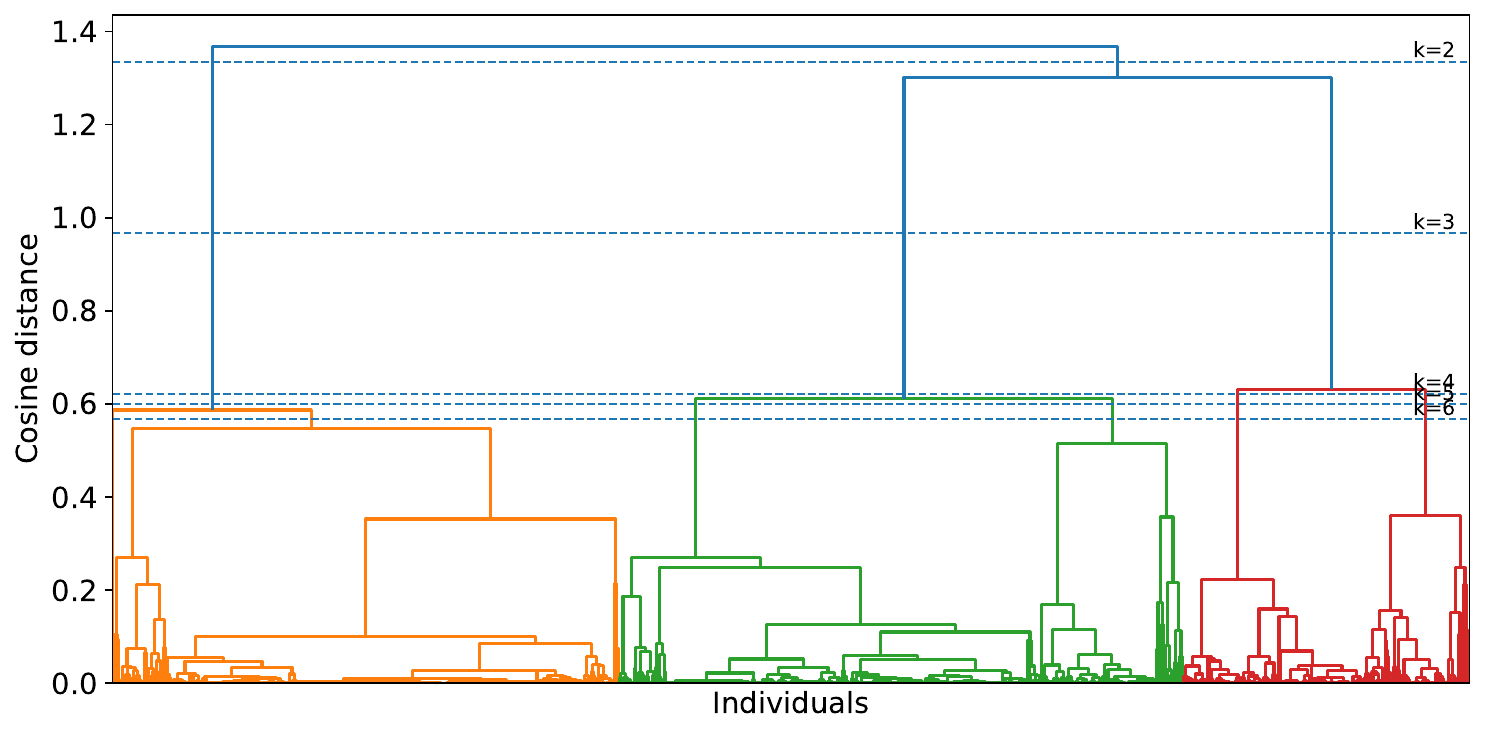}
    \caption{Hierarchical clustering dendrogram of individual attention weight patterns for CMAP Data.}
\end{figure}

\begin{figure}[H]
\centering
\begin{subfigure}[t]{0.32\textwidth}
    \centering
    \includegraphics[width=\textwidth]{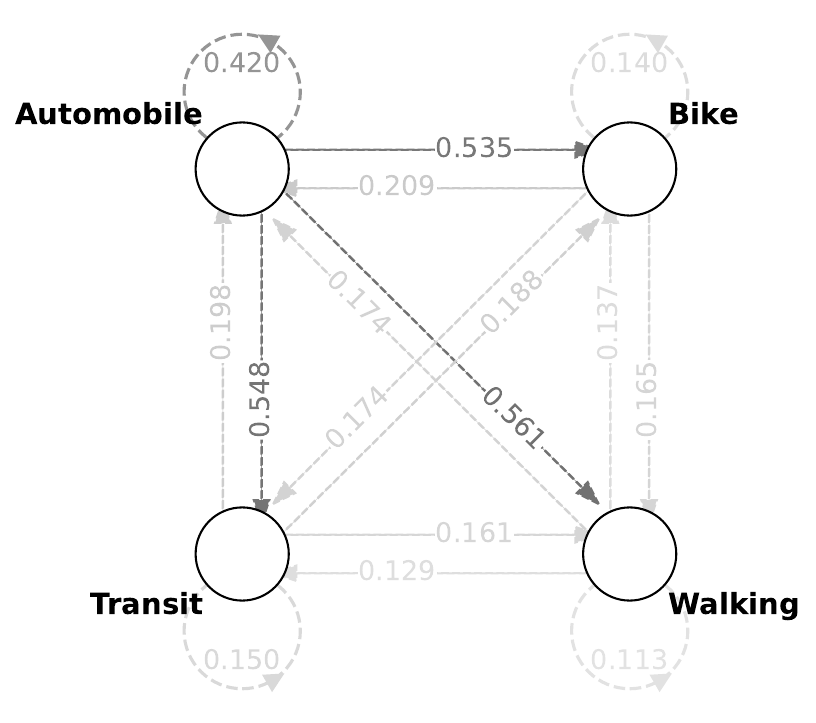}
    \caption{Cluster 1 (n=422)}
    \label{fig:chicago_cluster1}
\end{subfigure}
\hfill
\begin{subfigure}[t]{0.32\textwidth}
    \centering
    \includegraphics[width=\textwidth]{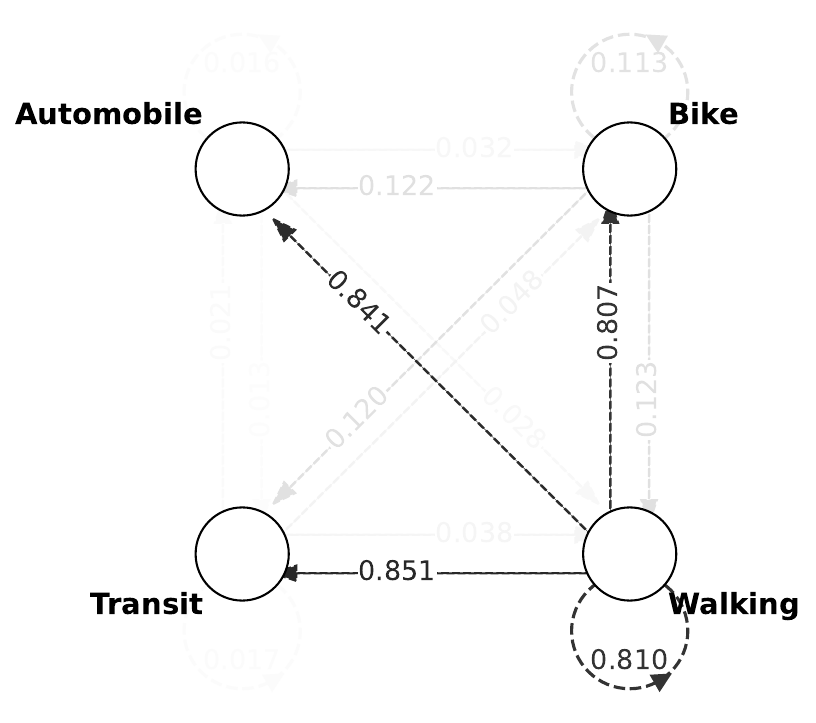}
    \caption{Cluster 2 (n=746)}
    \label{fig:chicago_cluster2}
\end{subfigure}
\hfill
\begin{subfigure}[t]{0.32\textwidth}
    \centering
    \includegraphics[width=\textwidth]{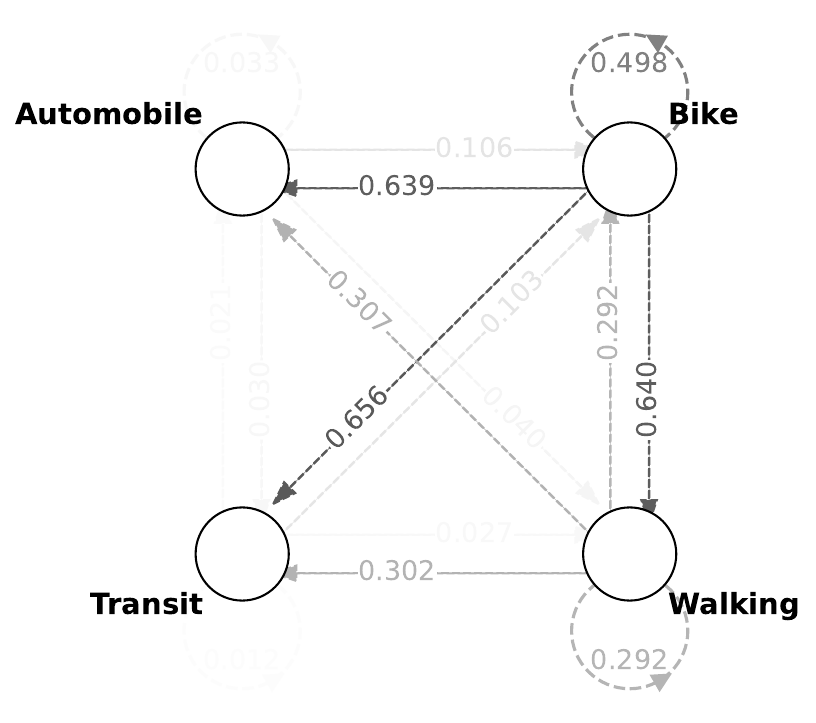}
    \caption{Cluster 3 (n=832)}
    \label{fig:chicago_cluster3}
\end{subfigure}

\caption{Attention-graph structures obtained by clustering individuals in the CMAP dataset.}
\label{fig:attention_clusters_chicago}
\end{figure}
\end{appendices}

\end{document}